\newtheorem{theorem}{Theorem}
\newtheorem{lemma}{Lemma}
\newtheorem{proposition}{Proposition}
\newtheorem{corollary}{Corollary}
\newdefinition{definition}{Definition}
\newproof{proof}{Proof}
\newcommand{\R}{\mathbb{R}}
\newcommand{\grad}{\nabla}
\newcommand{\norm}[1]{{\left\|{#1}\right\|}}
\newcommand{\Breg}{\mathcal{B}}
\DeclareMathOperator{\Regret}{Regret}
\DeclareMathOperator*{\Exp}{\mathbf{E}}
\DeclareMathOperator*{\argmin}{argmin}
\begin{document}

\begin{frontmatter}

\title{Scale-Free Online Learning\tnoteref{t1}}

\tnotetext[t1]{A preliminary version of this paper~\cite{Orabona-Pal-2015} was
               presented at ALT 2015.}

\author{Francesco Orabona\corref{aff1}}
\ead{francesco@orabona.com}
\cortext[aff1]{Work done while at Yahoo Research.}
\address{Stony Brook University, Stony Brook, NY 11794, USA}

\author{D\'avid P\'al\corref{cor1}}
\ead{dpal@yahoo-inc.com}
\cortext[cor1]{Corresponding author}

\address{Yahoo Research, 14th Floor, 229 West 43rd Street, New York, NY 10036, USA}

\begin{abstract}
We design and analyze algorithms for online linear optimization that have
optimal regret and at the same time do not need to know any upper or lower
bounds on the norm of the loss vectors. Our algorithms are instances of the
Follow the Regularized Leader (FTRL) and Mirror Descent (MD) meta-algorithms. We
achieve adaptiveness to the norms of the loss vectors by scale invariance, i.e.,
our algorithms make exactly the same decisions if the sequence of loss vectors
is multiplied by any positive constant.  The algorithm based on FTRL works for
any decision set, bounded or unbounded.  For unbounded decisions sets, this is
the first adaptive algorithm for online linear optimization with a non-vacuous
regret bound. In contrast, we show lower bounds on scale-free algorithms based
on MD on unbounded domains.

\end{abstract}

\end{frontmatter}

\section{Introduction}
\label{section:introduction}

Online Linear Optimization (OLO) is a problem where an algorithm repeatedly
chooses a point $w_t$ from a convex decision set $K$, observes an arbitrary, or
even adversarially chosen, loss vector $\ell_t$ and suffers the loss $\langle
\ell_t, w_t \rangle$.  The goal of the algorithm is to have a small cumulative
loss. The performance of an algorithm is evaluated by the so-called regret,
which is the difference of the cumulative losses of the algorithm and of the
(hypothetical) strategy that would choose in every round the same best point in
hindsight.

OLO is a fundamental problem in machine
learning~\cite{Cesa-Bianchi-Lugosi-2006, Rakhlin-Sridharan-2009,
Shalev-Shwartz-2011}.  Many learning problems can be directly phrased as OLO,
e.g., learning with expert advice~\cite{Littlestone-Warmuth-1994, Vovk-1998,
Freund-Schapire-1997, Cesa-Bianchi-Haussler-Helmbold-Schapire-Warmuth-1997}
and online combinatorial optimization~\cite{Kalai-Vempala-2005,
Helmbold-Warmuth-2009, Koolen-Warmuth-Kivinen-2010}. Other problems can be
reduced to OLO, e.g., online convex
optimization~\cite{Abernethy-Bartlett-Rakhlin-Tewari-2008},
\cite[Chapter~2]{Shalev-Shwartz-2011}, online classification
~\cite{Rosenblatt-1958, Freund-Schapire-1999} and
regression~\cite{Kivinen-Warmuth-1997},
~\cite[Chapters~11~and~12]{Cesa-Bianchi-Lugosi-2006}, multi-armed bandits
problems~\cite[Chapter~6]{Cesa-Bianchi-Lugosi-2006},
\cite{Abernethy-Hazan-Rakhlin-2008, Bubeck-Cesa-Bianchi-2012}, and batch and
stochastic optimization of convex functions~\cite{Nemirovski-Yudin-1983,
Bubeck-2015}.  Hence, a result in OLO immediately implies other results in all
these domains.

The adversarial choice of the loss vectors received by the algorithm is what
makes the OLO problem challenging. In particular, if an OLO algorithm commits
to an upper bound on the norm of future loss vectors, its regret can be made
arbitrarily large through an adversarial strategy that produces loss vectors
with norms that exceed the upper bound.

For this reason, most of the existing OLO algorithms receive as an input---or
implicitly assume---an upper bound $B$ on the norm of the loss vectors.  The
input $B$ is often disguised as the learning rate, the regularization
parameter, or the parameter of strong convexity of the regularizer.
However, these algorithms have two obvious drawbacks.

First, they do not come with any regret guarantee for sequences of loss vectors
with norms exceeding $B$. Second, on sequences of loss vectors with norms
bounded by $b \ll B$, these algorithms fail to have an optimal regret guarantee
that depends on $b$ rather than on $B$.

% Change spacing between rows.
\renewcommand{\arraystretch}{1.8}

\begin{table}[t]
\fontsize{8}{8.2}\selectfont
\centering
\begin{tabular}{|p{3.6cm}|c|p{3.4cm}|c|}
\hline
\textbf{Algorithm} & \textbf{Decisions Set(s)} & \textbf{Regularizer(s)} & \textbf{Scale-Free} \\ \hline \hline
\textsc{Hedge} \cite{Freund-Schapire-1997} & Probability Simplex & Negative Entropy & No \\ \hline
\textsc{GIGA} \cite{Zinkevich-2003} & Any Bounded & $\frac{1}{2}\|w\|_2^2$ & No \\ \hline
\textsc{RDA} \cite{Xiao-2010} & \textbf{Any} & \textbf{Any Strongly Convex} & No \\ \hline
\textsc{FTRL-Proximal} \cite{McMahan-Streeter-2010,McMahan-2014} & Any Bounded & $\frac{1}{2}\|w\|_2^2 + $ any convex func.\footnotemark & \textbf{Yes} \\ \hline
\textsc{AdaGrad MD} \cite{Duchi-Hazan-Singer-2011} & Any Bounded & $\frac{1}{2}\|w\|_2^2 + $ any convex func. & \textbf{Yes} \\ \hline
\textsc{AdaGrad FTRL} \cite{Duchi-Hazan-Singer-2011} & \textbf{Any} & $\frac{1}{2}\|w\|_2^2 + $ any convex func. & No \\ \hline
\textsc{AdaHedge} \cite{de-Rooij-van-Erven-Grunwald-Koolen-2014} & Probability Simplex & Negative Entropy & \textbf{Yes} \\ \hline
\textsc{NAG} \cite{Ross-Mineiro-Langford-2013} & $\{u: \max_t \langle \ell_t, u\rangle \le C\}$ & $\frac{1}{2}\|w\|_2^2 $& Partially\footnotemark \\ \hline
\textsc{Scale invariant algorithms} \cite{Orabona-Crammer-Cesa-Bianchi-2014} & \textbf{Any} & $\frac{1}{2}\|w\|_p^2 + $ any convex func. \newline $1 < p \le 2$ & Partially\textsuperscript{\ref{footnote-label2}} \\ \hline
%\textsc{AdaFTRL} \textbf{[this paper]} & Any Bounded & \textbf{Any Strongly Convex} & \textbf{Yes} \\ \hline
\textsc{Scale-free MD} \textbf{[this paper]} & $\sup_{u,v \in K} \Breg_f(u,v) < \infty$ & \textbf{Any Strongly Convex} & \textbf{Yes} \\ \hline
\textsc{SOLO FTRL} \textbf{[this paper]} & \textbf{Any} & \textbf{Any Strongly Convex} & \textbf{Yes} \\ \hline
\end{tabular}
\caption{Selected results for OLO. Best results in each column are in bold.
\label{table:results}
}
\end{table}

\addtocounter{footnote}{-2}

\stepcounter{footnote}\footnotetext{\label{footnote-label1} Even if, in
principle the FTRL-Proximal algorithm can be used with any proximal regularizer,
to the best of our knowledge a general way to construct proximal regularizers is
not known. The only proximal regularizer we are aware is based on the 2-norm.}

\stepcounter{footnote}\footnotetext{\label{footnote-label2} These algorithms
attempt to produce an invariant sequence of predictions $\langle w_t, \ell_t
\rangle$, rather than a sequence of invariant $w_t$.}

There is a clear practical need to design algorithms that adapt automatically to
the norms of the loss vectors.  A natural, yet overlooked, design method to
achieve this type of adaptivity is by insisting to have a \textbf{scale-free}
algorithm.  That is, with the same parameters, the sequence of decisions of the
algorithm does not change if the sequence of loss vectors is multiplied by a
positive constant.  The most important property of scale-free algorithms is that
both their loss and their regret scale linearly with the maximum norm of the
loss vector appearing in the sequence.

\subsection{Previous results}

The majority of the existing algorithms for OLO are based on two generic
algorithms: \textsc{Follow The Regularizer Leader} (\textsc{FTRL}) and
\textsc{Mirror Descent} (\textsc{MD}). \textsc{FTRL} dates back to the
potential-based forecaster in \cite[Chapter~11]{Cesa-Bianchi-Lugosi-2006} and
its theory was developed in \cite{Shalev-Shwartz-2007}.  The name \textsc{Follow
The Regularized Leader} comes from~\cite{Abernethy-Hazan-Rakhlin-2008}.
Independently, the same algorithm was proposed in~\cite{Nestorov-2009} for
convex optimization under the name \textsc{Dual Averaging} and rediscovered
in~\cite{Xiao-2010} for online convex optimization. Time-varying regularizers
were analyzed in~\cite{Duchi-Hazan-Singer-2011} and the analysis tightened in
\cite{Orabona-Crammer-Cesa-Bianchi-2014}. \textsc{MD} was originally proposed in
\cite{Nemirovski-Yudin-1983} and later analyzed in~\cite{Beck-Teboulle-2003} for
convex optimization. In the online learning literature it makes its first
appearance, with a different name, in~\cite{Kivinen-Warmuth-1997}.

Both \textsc{FTRL} and \textsc{MD} are parametrized by a function called
a \emph{regularizer}.
%The function is assumed to be strongly convex with respect to a given norm.
Based on different regularizers different algorithms with
different properties can be instantiated. A summary of algorithms for OLO is
presented in Table~\ref{table:results}.  All of them are instances of
\textsc{FTRL} or \textsc{MD}.

Scale-free versions of \textsc{MD} include \textsc{AdaGrad
MD}~\cite{Duchi-Hazan-Singer-2011}. However, the \textsc{AdaGrad MD} algorithm
has a non-trivial regret bounds only when the Bregman divergence associated with
the regularizer is bounded. In particular, since a bound on the Bregman
divergence implies that the decision set is bounded, the regret bound for
\textsc{AdaGrad MD} is vacuous for unbounded sets. In fact, as we show in
Section~\ref{subsection:mirror-descent-lower-bound}, \textsc{AdaGrad MD} and
similar algorithms based on \textsc{MD} incurs $\Omega(T)$ regret, in the worst
case, if the Bregman divergence is not bounded.

Only one scale-free algorithm based on \textsc{FTRL} was known. It is the
\textsc{AdaHedge}~\cite{de-Rooij-van-Erven-Grunwald-Koolen-2014} algorithm for
learning with expert advice, where the decision set is bounded. An algorithm
based on \textsc{FTRL} that is ``almost'' scale-free is \textsc{AdaGrad
FTRL}~\cite{Duchi-Hazan-Singer-2011}.  This algorithm fail to be scale-free due
to ``off-by-one'' issue; see~\cite{McMahan-2014} and the discussion in
Section~\ref{section:solo-ftrl}. Instead,
\textsc{FTRL-Proximal}~\cite{McMahan-Streeter-2010,McMahan-2014} solves the
off-by-one issue, but it requires proximal regularizers. In general, proximal
regularizers do not have a simple form and even the simple 2-norm case requires
bounded domains to achieve non-vacuous regret.

For unbounded decision sets no scale-free algorithm with a non-trivial regret
bound was known. Unbounded decision sets are practically important (see, e.g.,
\cite{Mcmahan-Holt-Sculley-2013}), since learning of large-scale linear models
(e.g., logistic regression) is done by gradient methods that can be reduced to
OLO with decision set $\R^d$.

\subsection{Overview of the Results}

We design and analyze two scale-free algorithms: \textsc{SOLO FTRL} and
\textsc{Scale-Free MD}.  A third one, \textsc{AdaFTRL}, is presented in
the Appendix. \textsc{SOLO FTRL} and \textsc{AdaFTRL} are based on
\textsc{FTRL}.  \textsc{AdaFTRL} is a generalization of
\textsc{AdaHedge}~\cite{de-Rooij-van-Erven-Grunwald-Koolen-2014} to arbitrary
strongly convex regularizers.  \textsc{SOLO FTRL} can be viewed as the
``correct'' scale-free version of the diagonal version of \textsc{AdaGrad
FTRL}~\cite{Duchi-Hazan-Singer-2011} generalized to arbitrary strongly convex
regularizers.  \textsc{Scale-Free MD} is based on \textsc{MD}. It is a
generalization of \textsc{AdaGrad MD}~\cite{Duchi-Hazan-Singer-2011} to
arbitrary strongly convex regularizers.  The three algorithms are presented in
Sections~\ref{section:solo-ftrl} and \ref{section:mirror-descent}, and
\ref{section:ada-ftrl}, respectively.

We prove that the regret of \textsc{SOLO FTRL} and \textsc{AdaFTRL} on bounded domains after $T$
rounds is bounded by $O (\sqrt{\sup_{v \in K} f(v) \sum_{t=1}^T\|\ell_t\|_*^2} )$
where $f$ is a non-negative regularizer that is $1$-strongly convex with respect
to a norm $\|\cdot\|$ and $\|\cdot\|_*$ is its dual norm. For \textsc{Scale-Free
MD}, we prove $O (\sqrt{\sup_{u,v \in K} B_f(u,v) \sum_{t=1}^T\|\ell_t\|_*^2} )$
where $B_f$ is the Bregman divergence associated with a $1$-strongly convex
regularizer $f$. In Section~\ref{section:lower-bound}, we show that the
$\sqrt{\sum_{t=1}^T \|\ell_t\|_*^2}$ term in the bounds is necessary by proving
a $\frac{D}{\sqrt{8}} \sqrt{\sum_{t=1}^T\|\ell_t\|_*^2}$ lower bound on the
regret of any algorithm for OLO for any decision set with diameter $D$ with
respect to the primal norm $\|\cdot\|$.

For \textsc{SOLO FTRL}, we prove that the regret against a competitor $u \in K$
is at most $O (f(u) \sqrt{\sum_{t=1}^T \|\ell_t\|_*^2} + \max_{t=1,2,\dots,T}
\|\ell_t\|_* \sqrt{T} )$.  As before, $f$ is a non-negative $1$-strongly convex
regularizer. This bound is non-trivial for any decision set, bounded or
unbounded.  The result makes \textsc{SOLO FTRL} the \textbf{first adaptive
algorithm for unbounded decision sets} with a non-trivial regret bound.

All three algorithms are \textbf{any-time}, i.e., they do not need to know the
number of rounds, $T$, in advance and the regret bounds hold for all $T$
simultaneously.

Our proof techniques rely on new homogeneous
inequalities (Lemmas~\ref{lemma:useful}, \ref{lemma:recurrence-solution})
which might be of independent interest.

Finally, in Section~\ref{subsection:mirror-descent-lower-bound}, we show
negative results for existing popular variants of \textsc{MD}. We show two
examples of decision sets and sequences of loss vectors of unit norm on which
these variants of \textsc{MD} have $\Omega(T)$ regret.  These results indicate
that \textsc{FTRL} is superior to \textsc{MD} in a worst-case sense.

\section{Notation and Preliminaries}
\label{section:preliminaries}

Let $V$ be a finite-dimensional\footnote{Many, but not all, of our
results can be extended to more general normed vector spaces.} real vector
space equipped with a norm $\|\cdot\|$. We denote by $V^*$ its dual vector
space.  The bi-linear map associated with $(V^*, V)$ is denoted by $\langle
\cdot, \cdot \rangle:V^* \times V \to \R$.  The dual norm of $\|\cdot\|$ is
$\|\cdot\|_*$.

In OLO, in each round $t=1,2,\dots$, the algorithm chooses a point $w_t$ in the
decision set $K \subseteq V$ and then the algorithm observes a loss vector
$\ell_t \in V^*$. The instantaneous loss of the algorithm in round $t$ is
$\langle \ell_t, w_t \rangle$. The cumulative loss of the algorithm after $T$
rounds is $\sum_{t=1}^T \langle \ell_t, w_t \rangle$.  The regret of the
algorithm with respect to a point $u \in K$ is
$$
\Regret_T(u) = \sum_{t=1}^T \langle \ell_t, w_t \rangle - \sum_{t=1}^T \langle \ell_t, u \rangle,
$$
and the regret with respect to the best point is $\Regret_T= \sup_{u \in K}
\Regret_T(u)$.  We assume that $K$ is a non-empty closed convex subset of $V$.
Sometimes we will assume that $K$ is also bounded. We denote by $D$ its
diameter with respect to $\|\cdot\|$, i.e., $D = \sup_{u,v \in K} \|u - v\|$.
If $K$ is unbounded, $D = +\infty$.

\subsection{Convex Analysis}

The \emph{Bregman divergence} of a convex differentiable function $f$ is
defined as $\Breg_f(u,v) = f(u) - f(v) - \langle \grad f(v), u - v \rangle$.
Note that $\Breg_f(u,v) \ge 0$ for any $u,v$ which follows directly from the
definition of convexity of $f$.

The \emph{Fenchel conjugate} of a function $f:K \to \R$ is the function
$f^*:V^* \to \R \cup \{+\infty\}$ defined as $f^*(\ell) = \sup_{w \in K} \left(
\langle \ell, w \rangle - f(w) \right)$.  The Fenchel conjugate of any function
is convex (since it is a supremum of affine functions) and satisfies
 the \emph{Fenchel-Young inequality}
$$
\forall w \in K, \ \forall \ell \in V^* \qquad \qquad
f(w) + f^*(\ell) \ge \langle \ell, w \rangle \; .
$$

Monotonicity of Fenchel conjugates follows easily from the definition: If
$f,g:K \to \R$ satisfy $f(w) \le g(w)$ for all $w \in K$ then $f^*(\ell) \ge
g^*(\ell)$ for every $\ell \in V^*$.

Given $\lambda > 0$, a function $f:K \to \R$ is called \emph{$\lambda$-strongly
convex} with respect to a norm $\|\cdot\|$ if and only if, for all $x,y \in K$,
$$
f(y) \ge f(x) + \langle \grad f(x), y - x \rangle + \frac{\lambda}{2}\|x - y\|^2 \; ,
$$
where $\grad f(x)$ is any subgradient of $f$ at the point $x$.

The following proposition relates the range of values of a strongly convex
function to the diameter of its domain. The proof can be found
in~\ref{section:definitions-proofs}.

\begin{proposition}[Diameter vs. Range]
\label{proposition:diameter-vs-range}
Let $K \subseteq V$ be a non-empty bounded closed convex set.  Let $D =
\sup_{u,v \in K} \|u - v\|$ be its diameter with respect to $\|\cdot\|$.  Let
$f:K \to \R$ be a non-negative lower semi-continuous function that is
$1$-strongly convex with respect to $\|\cdot\|$.  Then, $D \le \sqrt{8 \sup_{v
\in K} f(v)}$.
\end{proposition}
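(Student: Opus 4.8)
The plan is to reduce the statement to the ``secant'' (midpoint) form of strong convexity and then exploit the non-negativity of $f$. First I would establish the midpoint inequality
$$
f\!\left(\tfrac{u+v}{2}\right) \le \tfrac{1}{2}f(u) + \tfrac{1}{2}f(v) - \tfrac{1}{8}\|u-v\|^2
\qquad \text{for all } u,v \in K.
$$
To derive it, fix $u,v \in K$, put $z = \tfrac{u+v}{2}$ (which lies in $K$ by convexity), and apply the defining $1$-strong-convexity inequality twice with base point $z$ and comparison points $u$ and $v$, using a single subgradient $\grad f(z)$. Adding the two inequalities, the linear terms cancel since $\langle \grad f(z), u-z\rangle + \langle \grad f(z), v-z\rangle = \langle \grad f(z), u+v-2z\rangle = 0$, while $\|u-z\| = \|v-z\| = \tfrac{1}{2}\|u-v\|$ makes the two quadratic terms sum to $\tfrac{1}{4}\|u-v\|^2$. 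Dividing by $2$ yields the displayed bound.

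With the midpoint inequality in hand the rest is immediate. Because $f \ge 0$ we have $f(\tfrac{u+v}{2}) \ge 0$, so
$$
\tfrac{1}{8}\|u-v\|^2 \le \tfrac{1}{2}\bigl(f(u)+f(v)\bigr) \le \sup_{w \in K} f(w) .
$$
Rearranging gives $\|u-v\| \le \sqrt{8 \sup_{w\in K} f(w)}$ for every pair $u,v\in K$, and taking the supremum over $u,v$ on the left-hand side produces $D \le \sqrt{8\sup_{v\in K} f(v)}$, as claimed.

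The only genuine obstacle is the existence of the subgradient $\grad f(z)$ at the midpoint: a finite convex function need only be subdifferentiable on the relative interior of its domain, whereas $z$ may lie on the relative boundary (for instance when $u,v$ and the segment joining them sit on a face of $K$). To handle this I would fix a reference point $w_0$ in the relative interior of $K$ and, for small $s \in (0,1)$, run the midpoint argument on the perturbed points $u_s = (1-s)u + s w_0$ and $v_s = (1-s)v + s w_0$, whose midpoint $(1-s)z + s w_0$ lies in the relative interior and is therefore subdifferentiable. Since $u_s,v_s \in K$, this gives $\tfrac{1}{8}\|u_s-v_s\|^2 \le \sup_{w\in K} f(w)$ with $\|u_s-v_s\| = (1-s)\|u-v\|$; letting $s \to 0^+$ recovers the bound for the original $u,v$. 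The closedness and lower semicontinuity hypotheses are the mild regularity conditions that make this limiting argument clean.
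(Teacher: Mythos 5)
Your proof is correct, but it takes a genuinely different route from the paper's. Writing $S = \sup_{v \in K} f(v)$: the paper picks the minimizer $v^* = \argmin_{v \in K} f(v)$---whose existence is exactly where lower semicontinuity and compactness are used---then combines the optimality condition $\langle \grad f(v^*), u - v^* \rangle \ge 0$ with $1$-strong convexity and non-negativity of $f(v^*)$ to get $\|u - v^*\| \le \sqrt{2S}$ for every $u \in K$, and finishes with the triangle inequality, $D \le 2\sqrt{2S} = \sqrt{8S}$. You instead apply strong convexity at the midpoint of an arbitrary pair $u, v \in K$, where the linear terms cancel identically (no optimality condition needed), so non-negativity of $f$ at the midpoint yields $\|u - v\| \le \sqrt{8S}$ directly, with no triangle-inequality doubling. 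Your route buys some extra generality: it never requires a minimizer to exist, so closedness and lower semicontinuity are not actually needed anywhere (despite your closing remark, your limiting step uses neither---it only needs $\|u_s - v_s\| \to \|u-v\|$, which is automatic); moreover, your perturbation into the relative interior explicitly handles a subdifferentiability issue that the paper's proof passes over in silence, since its $v^*$ may lie on the relative boundary of $K$, where the subgradient invoked by the definition of strong convexity need not exist---the same issue you flag at the midpoint. What the paper's argument buys in exchange is brevity and a slightly stronger intermediate fact, namely that every point of $K$ lies within distance $\sqrt{2S}$ of the minimizer of $f$; both arguments land on the same constant $\sqrt{8}$.
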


Fenchel conjugates and strongly convex functions have certain nice properties,
which we list in Proposition~\ref{proposition:conjugate-properties} below.

\begin{proposition}[Fenchel Conjugates of Strongly Convex Functions]
\label{proposition:conjugate-properties}
Let $K \subseteq V$ be a non-empty closed convex set with diameter
$D:=\sup_{u,v \in K} \|u-v\|$.  Let $\lambda > 0$, and let $f:K \to \R$ be a
lower semi-continuous function that is $\lambda$-strongly convex with respect
to $\|\cdot\|$.  The Fenchel conjugate of $f$ satisfies:
\begin{enumerate}

\item $f^*$ is finite everywhere and differentiable everywhere.

\item For any $\ell \in V^*$, $\grad f^*(\ell) = \argmin_{w \in K} \left( f(w) - \langle \ell, w \rangle \right)$.

\item For any $\ell \in V^*$, $f^*(\ell) + f(\grad f^*(\ell)) = \langle \ell, \grad f^*(\ell) \rangle$.

\item $f^*$ is $\frac{1}{\lambda}$-strongly smooth, i.e., for any $x,y \in
V^*$, $\Breg_{f^*}(x, y) \le \frac{1}{2\lambda} \|x - y\|_*^2$.

\item $f^*$ has $\frac{1}{\lambda}$-Lipschitz continuous gradients, i.e.,
for any $x,y \in V^*$,
$\|\grad f^*(x) - \grad f^*(y)\| \le \frac{1}{\lambda} \|x - y\|_*$.

\item $\Breg_{f^*}(x,y) \le D\|x-y\|_*$ for any $x,y \in V^*$.

\item $\|\grad f^*(x) - \grad f^*(y)\| \le D$ for any $x,y \in V^*$.

\item For any $c > 0$, $(cf(\cdot))^* = cf^*(\cdot/c)$.
\end{enumerate}
\end{proposition}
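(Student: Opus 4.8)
The plan is to build everything on the attainment of the supremum defining $f^*$. First I would fix $\ell \in V^*$ and consider $g_\ell(w) = f(w) - \langle \ell, w \rangle$; since $f$ is $\lambda$-strongly convex, so is $g_\ell$, hence it is coercive on the closed set $K$ and, being lower semi-continuous, attains its minimum at a unique point $w_\ell$. This gives finiteness of $f^*(\ell) = -\min_{w \in K} g_\ell(w)$ for every $\ell$. For differentiability I would invoke the standard fact that $\partial f^*(\ell)$ equals the set of maximizers of $\langle \ell, w \rangle - f(w)$ (equivalently, the points where the Fenchel--Young inequality holds with equality); uniqueness of $w_\ell$ makes this subdifferential a singleton, and a finite convex function with a singleton subdifferential is differentiable. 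This settles (1) and simultaneously (2), namely $\grad f^*(\ell) = w_\ell = \argmin_{w \in K} g_\ell(w)$. Property (3) is then just the statement that the sup is attained at $w_\ell$, i.e. $f^*(\ell) = \langle \ell, w_\ell \rangle - f(w_\ell)$, rearranged.

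The analytic core is a monotonicity (co-coercivity) inequality. Writing $w_x = \grad f^*(x)$ and $w_y = \grad f^*(y)$, I would use that $w_x$ is the minimizer of the $\lambda$-strongly convex $g_x$ over the convex set $K$, which yields $g_x(w) \ge g_x(w_x) + \frac{\lambda}{2}\norm{w - w_x}^2$ for every $w \in K$ (the first-order term drops because $w_x$ is the constrained minimizer). Evaluating at $w = w_y$, performing the symmetric step with $g_y$ at $w = w_x$, and adding the two inequalities, the $f$ terms cancel and I obtain $\langle x - y, w_x - w_y \rangle \ge \lambda \norm{w_x - w_y}^2$. Combining with $\langle x - y, w_x - w_y \rangle \le \norm{x-y}_* \norm{w_x - w_y}$ (H\"older for dual norms) and dividing gives (5), $\norm{\grad f^*(x) - \grad f^*(y)} \le \frac{1}{\lambda}\norm{x-y}_*$. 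Property (7) needs nothing new: by (2) both $\grad f^*(x)$ and $\grad f^*(y)$ lie in $K$, so their distance is at most the diameter $D$.

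It remains to convert the gradient bounds into the Bregman-divergence bounds (4) and (6). For these I would integrate the gradient along the segment $t \mapsto y + t(x-y)$, $t \in [0,1]$, using that $f^*$ is differentiable everywhere (from (1)): $\Breg_{f^*}(x,y) = \int_0^1 \langle x - y,\, \grad f^*(y + t(x-y)) - \grad f^*(y) \rangle \, dt$. Bounding the integrand by $\norm{x-y}_* \cdot \frac{t}{\lambda}\norm{x-y}_*$ via (5) yields (4), while bounding it by $\norm{x-y}_* \cdot D$ via (7) yields (6). Finally (8) is a one-line substitution: $(cf)^*(\ell) = \sup_{w}(\langle \ell, w \rangle - c f(w)) = c \sup_{w}(\langle \ell/c, w \rangle - f(w)) = c f^*(\ell/c)$.

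The step I expect to be the main obstacle is the rigorous justification of (1), in particular differentiability, since it is the one place using genuinely nontrivial convex analysis (the identification of $\partial f^*$ with the maximizer set, and the singleton-subdifferential criterion for differentiability), and it must be handled with care when $K$ is unbounded, where I rely on coercivity from strong convexity to guarantee the minimizer exists. Everything downstream is then elementary: the co-coercivity inequality is pure algebra once attainment and the constrained strong-convexity lower bound are in hand, and (4), (6), (7), (8) follow mechanically.
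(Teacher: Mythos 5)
Your proposal is correct, but it takes a genuinely different route from the paper. The paper does not prove properties 1--5 and 8 at all: it cites \cite{Shalev-Shwartz-2007} for them, derives property 7 trivially from property 2 (exactly as you do), and proves only property 6 in-house, via a symmetrization trick: since $\Breg_{f^*}(y,x) \ge 0$,
$$
\Breg_{f^*}(x,y) \le \Breg_{f^*}(x,y) + \Breg_{f^*}(y,x) = \langle x-y, \grad f^*(x) - \grad f^*(y) \rangle \le \norm{x-y}_* \cdot \norm{\grad f^*(x) - \grad f^*(y)} \le D \norm{x-y}_* \; ,
$$
using H\"older and property 7. You instead build a self-contained development: attainment and uniqueness of the constrained minimizer, the quadratic-growth inequality at that minimizer, the co-coercivity bound $\langle x-y, \grad f^*(x)-\grad f^*(y)\rangle \ge \lambda \norm{\grad f^*(x)-\grad f^*(y)}^2$ yielding property 5, and then a single integral representation $\Breg_{f^*}(x,y) = \int_0^1 \langle x-y, \grad f^*(y+t(x-y)) - \grad f^*(y)\rangle\,dt$ that delivers properties 4 and 6 uniformly by plugging in the two gradient bounds. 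What the paper's trick buys is brevity: two lines of pure algebra, no appeal to the fundamental theorem of calculus along a segment (which in your argument needs continuity of $\grad f^*$, available from property 5). What your route buys is that 4 and 6 become instances of one computation and the proposition no longer rests on an external reference, except at the one point you correctly flag: the identification of $\partial f^*(\ell)$ with the maximizer set (Fenchel--Moreau) and the singleton-subdifferential criterion for differentiability, which plays the same role in your write-up that the citation to \cite{Shalev-Shwartz-2007} plays in the paper's. Two minor points to tighten: the quadratic-growth step (``the first-order term drops'') should be justified by the standard limiting argument $g_x(w_x) \le g_x(w_x + t(w-w_x))$ with $t \to 0^+$, since $f$ is defined only on $K$ and the minimizer can lie on the boundary where no subgradient-based stationarity is available; and in the co-coercivity step the H\"older pairing must respect that $x-y \in V^*$ while the gradients live in $K \subseteq V$, which your inequality does. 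Neither is a gap.
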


Except for properties 6 and 7, the proofs can be found
in~\cite{Shalev-Shwartz-2007}.  Property 6 is proven
in~\ref{section:definitions-proofs}. Property 7 trivially follows from property
2.

\begin{algorithm}[t]
\caption{\textsc{FTRL with Varying Regularizer}}
\label{algorithm:ftrl-varying-regularizer}
\begin{algorithmic}[1]
\REQUIRE Non-empty closed convex set $K \subseteq V$
\STATE Initialize $L_0 \leftarrow 0$
\FOR{$t=1,2,3,\dots$}
\STATE Choose a regularizer $R_t:K \to \R$
\STATE $w_t \leftarrow \argmin_{w \in K} \left( \langle L_{t-1}, w \rangle + R_t(w) \right)$
\STATE Predict $w_t$
\STATE Observe $\ell_t \in V^*$
\STATE $L_t \leftarrow L_{t-1} + \ell_t$
\ENDFOR
\end{algorithmic}
\end{algorithm}

\subsection{Generic FTRL with Varying Regularizer}
\label{section:generic-ftrl}

Two of our scale-free algorithms are instances of \textsc{FTRL} with
\emph{varying regularizers}, presented as
Algorithm~\ref{algorithm:ftrl-varying-regularizer}.  The algorithm is
paramatrized by a sequence $\{R_t\}_{t=1}^\infty$ of functions $R_t:K \to \R$
called \emph{regularizers}.  Each regularizer $R_t$ can depend on the past loss
vectors $\ell_1, \ell_2, \dots, \ell_{t-1}$ in an arbitrary way.  The following
lemma bounds its regret.

\begin{lemma}[Regret of \textsc{FTRL}]
\label{lemma:generic-regret-bound}
If the regularizers $R_1, R_2, \dots$ chosen by
Algorithm~\ref{algorithm:ftrl-varying-regularizer} are strongly convex and lower
semi-continuous, the algorithm's regret is upper bounded as
$$
\Regret_T(u) \le R_{T+1}(u) + R_1^*(0) + \sum_{t=1}^{T} \Breg_{R_t^*}(-L_t, -L_{t-1}) - R_t^*(-L_t) + R_{t+1}^*(-L_t) \; .
$$
\end{lemma}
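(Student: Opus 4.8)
The plan is to run a primal-dual telescoping argument built on the Fenchel conjugates $R_t^*$. The starting observation is that, since each $R_t$ is strongly convex and lower semi-continuous, Proposition~\ref{proposition:conjugate-properties} applies: each $R_t^*$ is finite and differentiable everywhere, the minimizer in the update is unique, and by Property~2 the algorithm's play can be rewritten as a conjugate gradient, $w_t = \grad R_t^*(-L_{t-1})$. This recasts the update rule as a purely dual statement, which is exactly what makes the Bregman machinery available.

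Next I would derive a per-round identity for the instantaneous loss $\langle \ell_t, w_t \rangle$. Expanding the definition of the Bregman divergence of $R_t^*$ at the pair $(-L_t, -L_{t-1})$ and substituting both $-L_t + L_{t-1} = -\ell_t$ and $\grad R_t^*(-L_{t-1}) = w_t$ yields
$$
\langle \ell_t, w_t \rangle = \Breg_{R_t^*}(-L_t, -L_{t-1}) - R_t^*(-L_t) + R_t^*(-L_{t-1}) \; .
$$
Summing over $t = 1, \dots, T$ produces the Bregman terms of the target bound together with two sums of conjugate values, $-\sum_t R_t^*(-L_t)$ and $+\sum_t R_t^*(-L_{t-1})$.

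The only real technical care — and thus the main obstacle — is telescoping these two sums when the regularizer changes each round, so the cancellation is not immediate. I would reindex $\sum_{t=1}^T R_t^*(-L_{t-1}) = R_1^*(0) + \sum_{t=1}^{T-1} R_{t+1}^*(-L_t)$, using $L_0 = 0$, and recombine it with $-\sum_{t=1}^T R_t^*(-L_t)$. Peeling off the two boundary terms $R_1^*(0)$ and $-R_{T+1}^*(-L_T)$ leaves the interior sum $\sum_{t=1}^T \bigl( R_{t+1}^*(-L_t) - R_t^*(-L_t) \bigr)$, which is precisely the mismatch term in the statement. Finally, subtracting the competitor's cumulative loss $\langle L_T, u \rangle$ and applying the Fenchel-Young inequality with $f = R_{T+1}$, $\ell = -L_T$, $w = u$, namely $-R_{T+1}^*(-L_T) - \langle L_T, u \rangle \le R_{T+1}(u)$, converts the leftover boundary term into the claimed $R_{T+1}(u)$ and completes the bound. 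I expect the whole argument to be clean once the reindexing bookkeeping is done correctly; there is no deep inequality beyond Fenchel-Young, and the strong-convexity hypothesis is used only to guarantee the differentiability that legitimizes the conjugate-gradient rewriting.
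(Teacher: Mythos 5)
Your proposal is correct and is essentially the paper's own argument run in the opposite order: you establish the per-round identity $\langle \ell_t, w_t \rangle = \Breg_{R_t^*}(-L_t,-L_{t-1}) - R_t^*(-L_t) + R_t^*(-L_{t-1})$ via $w_t = \grad R_t^*(-L_{t-1})$ first and apply Fenchel--Young to the boundary term $-R_{T+1}^*(-L_T)$ last, whereas the paper telescopes $\sum_{t}\bigl(R_{t+1}^*(-L_t) - R_t^*(-L_{t-1})\bigr)$ and applies Fenchel--Young first, then rewrites the residual sum as Bregman divergences. The ingredients (Property~2 of Proposition~\ref{proposition:conjugate-properties}, the Bregman expansion, the reindexing, and Fenchel--Young at $(-L_T, u)$ with $R_{T+1}$) are identical, so this is the same proof up to presentation.
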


The proof of the lemma can be found in~\cite{Orabona-Crammer-Cesa-Bianchi-2014}.
For completeness, we include it in~\ref{section:definitions-proofs}.

\subsection{Generic Mirror Descent with Varying Regularizer}

\begin{algorithm}[t]
\caption{\textsc{Mirror Descent with Varying Regularizer}}
\label{algorithm:mirror-descent-varying-regularizer}
\begin{algorithmic}[1]
\REQUIRE Non-empty closed convex set $K \subseteq V$
\STATE Choose a regularizer $R_0:K \to \R$
\STATE $w_1 \leftarrow \argmin_{w \in K} R_0(w)$
\FOR{$t=1,2,3,\dots$}
\STATE Predict $w_t$
\STATE Observe $\ell_t \in V^*$
\STATE Choose a regularizer $R_t:K \to \R$
\STATE $w_{t+1} \leftarrow \argmin_{w \in K} \left( \langle \ell_t, w \rangle + \Breg_{R_t}(w, w_t) \right)$
\ENDFOR
\end{algorithmic}
\end{algorithm}

\textsc{Mirror Descent} (MD) is a generic algorithm similar to \textsc{FTRL}
but quite different in the details. The algorithm is stated as
Algorithm~\ref{algorithm:mirror-descent-varying-regularizer}. The algorithm is
parametrized by a sequence $\{R_t\}_{t=0}^\infty$ of convex functions $R_t:K
\to \R$ called \emph{regularizers}. Each regularizer $R_t$ can depend on past
loss vectors $\ell_1, \ell_2, \dots, \ell_t$ in an arbitrary way. If $R_t$ is
not differentiable,\footnote{Note that this can happen even when $R_t$ is a
restriction of a differentiable function defined on a superset of $K$.  If $K$
is bounded and closed, $R_t$ fails to be differentiable at the boundary of $K$.
If $K$ is a subset of an affine subspace of a dimension smaller than the
dimension of $V$, then $R_t$ fails to be differentiable everywhere.} the
Bregman divergence, $\Breg_{R_t}(u,v) = R_t(u) - R_t(v) - \langle \grad R_t(v),
u - v \rangle$ needs to be defined. This is done by choosing a subgradient map
$\grad R_t:K \to V$, i.e., a function such that $\grad R_t(w)$ is a subgradient
of $R_t$ at any point $w$. If $R_t$ is a restriction of a differentiable function
$R'_t$, it is convenient to define $\grad R_t(w) = \grad R'_t(w)$ for all $w
\in K$. The following lemma bounds the regret of \textsc{MD}.

\begin{lemma}[Regret of \textsc{MD}]
\label{lemma:mirror-descent-regret}
Algorithm~\ref{algorithm:mirror-descent-varying-regularizer} satisfies, for any
$u \in K$,
$$
\Regret_T(u)
\le
\sum_{t=1}^T \langle \ell_t, w_t - w_{t+1} \rangle - \Breg_{R_t}(w_{t+1}, w_t) + \Breg_{R_t}(u,w_t) - \Breg_{R_t}(u, w_{t+1}) \; .
$$
\end{lemma}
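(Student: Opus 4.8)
The plan is to bound the regret of Mirror Descent by analyzing the per-round "progress" using the optimality condition of the update step. The target inequality is a telescoping-style decomposition, so the natural strategy is to establish a single-round bound and then sum over $t$.

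The plan is to bound the single-round regret $\langle \ell_t, w_t - u \rangle$ by a telescoping Bregman expression and then sum over $t$, using that $\Regret_T(u) = \sum_{t=1}^T \langle \ell_t, w_t - u\rangle$. First I would split the per-round regret as
$$
\langle \ell_t, w_t - u \rangle = \langle \ell_t, w_t - w_{t+1} \rangle + \langle \ell_t, w_{t+1} - u \rangle,
$$
so that the first summand already matches the $\langle \ell_t, w_t - w_{t+1}\rangle$ term in the claimed bound, and it remains only to control $\langle \ell_t, w_{t+1} - u\rangle$.

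For that second summand I would invoke the first-order optimality condition of the update. Since $w_{t+1}$ minimizes $w \mapsto \langle \ell_t, w\rangle + \Breg_{R_t}(w, w_t)$ over the convex set $K$, and the Bregman term has (sub)gradient $\grad R_t(w) - \grad R_t(w_t)$ as a function of $w$, there is a subgradient $\grad R_t(w_{t+1})$ for which the variational inequality $\langle \ell_t + \grad R_t(w_{t+1}) - \grad R_t(w_t),\, u - w_{t+1}\rangle \ge 0$ holds for every $u \in K$. Rearranging yields $\langle \ell_t, w_{t+1} - u\rangle \le \langle \grad R_t(w_{t+1}) - \grad R_t(w_t),\, u - w_{t+1}\rangle$.

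The key algebraic step is the three-point identity for Bregman divergences, which I would verify by expanding the three divergences directly from their definition $\Breg_{R_t}(a,b) = R_t(a) - R_t(b) - \langle \grad R_t(b), a-b\rangle$:
$$
\langle \grad R_t(w_{t+1}) - \grad R_t(w_t),\, u - w_{t+1}\rangle = \Breg_{R_t}(u, w_t) - \Breg_{R_t}(u, w_{t+1}) - \Breg_{R_t}(w_{t+1}, w_t).
$$
Chaining the three displays bounds $\langle \ell_t, w_t - u\rangle$ by exactly the $t$-th summand appearing in the lemma, and summing over $t = 1, \dots, T$ completes the argument.

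The main obstacle is the nondifferentiable case flagged in the text: the subgradient $\grad R_t(w_{t+1})$ produced by the optimality condition must coincide with the one used to define $\Breg_{R_t}(u, w_{t+1})$, since the three-point identity is sensitive to this choice. I would therefore fix the subgradient map $\grad R_t$ so that its value at the minimizer $w_{t+1}$ is precisely the subgradient certifying optimality; the existence of such a subgradient is guaranteed by the standard optimality conditions for minimizing a convex function over a closed convex set, so no differentiability of $R_t$ is actually required.
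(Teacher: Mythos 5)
Your proposal is correct and follows essentially the same route as the paper's proof: the first-order optimality (variational inequality) for the MD update at $w_{t+1}$, combined with the three-point identity for Bregman divergences, bounds each per-round regret term $\langle \ell_t, w_t - u\rangle$ by the corresponding summand, and summing over $t$ finishes the argument. Your explicit care in fixing the subgradient map $\grad R_t$ so that its value at $w_{t+1}$ is the one certifying optimality is a slightly more careful treatment of the nondifferentiable case than the paper gives, but it is the same proof.
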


The proof of the lemma can be found
in~\cite{Rakhlin-Sridharan-2009,Duchi-Shalev-Shwartz-Singer-Tewari-2010}.  For
completeness, we give a proof in~\ref{section:mirror-descent-proofs}.

\subsection{Per-Coordinate Learning}
\label{sec:per-coordinate}

An interesting class of algorithms proposed in~\cite{McMahan-Streeter-2010} and
\cite{Duchi-Hazan-Singer-2011} are based on so-called per-coordinate
learning rates.  As shown in \cite{Streeter-McMahan-2010}, any algorithm for
OLO can be used with per-coordinate learning rates as well.

Abstractly, we assume that the decision set is a Cartesian product $K=K_1
\times K_2 \times \dots \times K_d$ of a finite number of convex sets.  On each
factor $K_j$, $j=1,2,\dots,d$, we can run any OLO algorithm separately and we
denote by $\Regret_T^{(j)}(u_j)$ its regret with respect to $u_j \in K_j$. The
overall regret with respect to any $u=(u_1, u_2, \dots, u_d) \in K$ can be
written as
$$
\Regret_T(u) = \sum_{j=1}^d \Regret_T^{(j)}(u_j) \; .
$$

If the algorithm for each factor is scale-free, the overall algorithm is
clearly scale-free as well. Hence, even if not explicitly mentioned in the
text, any algorithm we present can be trivially transformed to a per-coordinate
version.

\section{\textsc{SOLO FTRL}}
\label{section:solo-ftrl}

In this section, we introduce our first scale-free algorithm; it will be based
on \textsc{FTRL}. The closest algorithm to a scale-free \textsc{FTRL} in the
existing literature is the \textsc{AdaGrad FTRL}
algorithm~\cite{Duchi-Hazan-Singer-2011}. It uses a regularizer on each
coordinate of the form
\begin{equation*}
R_t(w) = R(w) \left(\delta + \sqrt{\sum_{i=1}^{t-1} \|\ell_i\|_*^2} \right).
\end{equation*}
This kind of regularizer would yield a scale-free algorithm \emph{only} for
$\delta=0$. In fact, with this choice of $\delta$ it is easy to see that the
predictions $w_t$ in line 4 of
Algorithm~\ref{algorithm:ftrl-varying-regularizer} would be independent of the
scaling of the $\ell_t$.  Unfortunately, the regret bound
in~\cite{Duchi-Hazan-Singer-2011} becomes vacuous for such setting in the
unbounded case. In fact, it requires $\delta$ to be greater than $\|\ell_t\|_*$
for all time steps $t$, requiring knowledge of the future (see Theorem~5
in~\cite{Duchi-Hazan-Singer-2011}). In other words, despite of its name,
\textsc{AdaGrad FTRL} is not fully adaptive to the norm of the gradient
vectors. Similar considerations hold for the \textsc{FTRL-Proximal} in
\cite{McMahan-Streeter-2010,McMahan-2014}: The scale-free setting of the
learning rate is valid only in the bounded case.

One simple approach would be to use a doubling trick on $\delta$ in order to
estimate on the fly the maximum norm of the losses. Note that a naive strategy
would still fail because the initial value of $\delta$ should be data-dependent
in order to have a scale-free algorithm. Moreover, we would have to upper bound
the regret in all the rounds where the norm of the current loss is bigger than
the estimate. Finally, the algorithm would depend on an additional parameter,
the ``doubling'' power. Hence, even in the case one would prove a regret bound,
such strategy would give the feeling that \textsc{FTRL} needs to be ``fixed''
in order to obtain a scale-free algorithm.

In the following, we propose a much simpler and better approach.  We propose to
use Algorithm~\ref{algorithm:ftrl-varying-regularizer} with the regularizer
\begin{equation}
\label{equation:solo-ftrl-regularizer}
R_t(w) = R(w) \sqrt{\sum_{i=1}^{t-1} \|\ell_i\|_*^2} \; ,
\end{equation}
where $R:K \to \R$ is any strongly convex function. Through a refined analysis,
we show that this regularizer suffices to obtain an optimal regret bound for any
decision set, bounded or unbounded.  We call this variant \textsc{Scale-free
Online Linear Optimization FTRL} algorithm (\textsc{SOLO FTRL}).  Our main
result is Theorem~\ref{theorem:regret-solo-ftrl} below, which is proven in
Section~\ref{section:solo-ftrl-regret-bound}.

The regularizer~\eqref{equation:solo-ftrl-regularizer} does not uniquely define
the \textsc{FTRL} minimizer $w_t = \argmin_{w \in K} R_t(w)$ when
$\sqrt{\sum_{i=1}^{t-1} \|\ell_i\|_*^2}$ is zero. This happens if $\ell_1,
\ell_2, \dots, \ell_{t-1}$ are all zero (and in particular for $t=1$).  In that
case, we define $w_t = \argmin_{w \in K} R(w)$ which is consistent with
$w_t = \lim_{a \to 0^+}  \argmin_{w \in K} a R(w)$.

\begin{theorem}[Regret of \textsc{SOLO FTRL}]
\label{theorem:regret-solo-ftrl}
Suppose $K \subseteq V$ is a non-empty closed convex set.  Let $D = \sup_{u,v
\in K} \|u - v\|$ be its diameter with respect to a norm $\|\cdot\|$.  Suppose
that the regularizer $R:K \to \R$ is a non-negative lower semi-continuous
function that is $\lambda$-strongly convex with respect to $\|\cdot\|$. The
regret of \textsc{SOLO FTRL} satisfies
\begin{align*}
\Regret_T(u)
& \le \left( R(u) + \frac{2.75}{\lambda}\right) \sqrt{\sum_{t=1}^{T} \norm{\ell_t}_*^2}
+ 3.5 \min\left\{\frac{\sqrt{T-1}}{\lambda} , D\right\} \max_{t \le T} \|\ell_t\|_* \; .
\end{align*}
\end{theorem}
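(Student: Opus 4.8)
The plan is to instantiate the generic FTRL bound (Lemma~\ref{lemma:generic-regret-bound}) with $R_t = \eta_t R$, where $\eta_t = \sqrt{\sum_{i=1}^{t-1}\norm{\ell_i}_*^2}$ (so $\eta_t = \sqrt{S_{t-1}}$ with $S_t := \sum_{i\le t}\norm{\ell_i}_*^2$), and then simplify the boundary terms. Since $\eta_1 = 0$ the regularizer $R_1\equiv 0$, giving $R_1^*(0)=0$, while $R_{T+1}(u)=R(u)\sqrt{\sum_{t=1}^T\norm{\ell_t}_*^2}$ already supplies the leading $R(u)\sqrt{\sum_t\norm{\ell_t}_*^2}$ term. (Strictly, $R_1$ is not strongly convex, so one should either run the analysis from the first round carrying a nonzero loss or check the $t=1$ summand directly; a support-function computation gives $\Breg_{R_1^*}(-\ell_1,0)\le D\norm{\ell_1}_*$, consistent with the convention stated before the theorem.) It then remains to bound $\sum_{t=1}^T \delta_t$ with $\delta_t = \Breg_{R_t^*}(-L_t,-L_{t-1}) - R_t^*(-L_t) + R_{t+1}^*(-L_t)$.

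By item~8 of Proposition~\ref{proposition:conjugate-properties} we have $R_t^* = \eta_t R^*(\cdot/\eta_t)$, so every quantity reduces to $R^*$ at rescaled points; in particular $\Breg_{R_t^*}(-L_t,-L_{t-1}) = \eta_t\,\Breg_{R^*}(-L_t/\eta_t,\,-L_{t-1}/\eta_t)$. Feeding this into the $\frac{1}{\lambda}$-strong smoothness (item~4) yields $\Breg_{R_t^*}(-L_t,-L_{t-1}) \le \frac{\norm{\ell_t}_*^2}{2\lambda\eta_t}$, while item~6 gives $\Breg_{R_t^*}(-L_t,-L_{t-1}) \le D\norm{\ell_t}_*$. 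Taking the minimum of these two estimates is what ultimately produces the $\min\{\sqrt{T-1}/\lambda,\,D\}$ factor in the statement.

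The crux is that the naive route fails on unbounded sets. One is tempted to drop the nonpositive terms $R_{t+1}^*(-L_t)-R_t^*(-L_t)\le 0$ — these are nonpositive because $R\ge 0$ and $\eta_{t+1}\ge\eta_t$ force $R_{t+1}\ge R_t$ pointwise, hence $R_{t+1}^*\le R_t^*$ by monotonicity of conjugates — and then bound $\sum_t \frac{\norm{\ell_t}_*^2}{2\lambda\eta_t}$. But the denominator $\eta_t=\sqrt{S_{t-1}}$ is the sum only up to round $t-1$ (the ``off-by-one'' issue), so a single round whose loss dwarfs the history makes $\norm{\ell_t}_*^2/\eta_t$ arbitrarily large, even though the actual regret stays finite; when $D=\infty$ nothing rescues this. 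The key is therefore to \emph{retain} the negative Fenchel differences, which cancel exactly this blow-up: one proves a scale-homogeneous per-round (or partial-sum) inequality — the role of Lemmas~\ref{lemma:useful} and~\ref{lemma:recurrence-solution} — that in effect replaces the dangerous $\eta_t$ by $\eta_{t+1}=\sqrt{S_t}\ge\norm{\ell_t}_*$ in the denominator, so each contribution is at worst $O(\norm{\ell_t}_*/\lambda)$. The telescoping part then sums as $\sum_t \norm{\ell_t}_*^2/\sqrt{S_t}\le 2\sqrt{S_T}$, and the residual off-by-one discrepancy is absorbed into $\max_{t\le T}\norm{\ell_t}_*$ times a $\sqrt{T-1}$ factor, giving the $\frac{2.75}{\lambda}\sqrt{S_T} + \frac{3.5\sqrt{T-1}}{\lambda}\max_t\norm{\ell_t}_*$ shape.

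Finally I would combine the two regimes: the $\sqrt{T-1}/\lambda$ estimate just described never invokes $D$ and hence holds for unbounded $K$, while bounding each Bregman term instead by $D\norm{\ell_t}_*$ and summing yields the parallel estimate $\frac{2.75}{\lambda}\sqrt{S_T} + 3.5\,D\max_t\norm{\ell_t}_*$; keeping the smaller correction gives $3.5\min\{\sqrt{T-1}/\lambda,\,D\}\max_t\norm{\ell_t}_*$. I expect the main obstacle to be precisely the off-by-one summation of the third paragraph: obtaining a clean, constant-explicit bound for $\sum_t \norm{\ell_t}_*^2/\sqrt{\sum_{i<t}\norm{\ell_i}_*^2}$, with the retained negative terms performing the cancellation, under \emph{no} assumption on the magnitudes of the losses. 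This is exactly what forces the auxiliary homogeneous inequalities and pins down the constants $2.75$ and $3.5$.
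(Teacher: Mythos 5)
Your proposal has the same architecture as the paper's proof: the same regularizer plugged into Lemma~\ref{lemma:generic-regret-bound}, the same diagnosis that dropping the negative differences $R_{t+1}^*(-L_t)-R_t^*(-L_t)$ wrecks the unbounded case, the same decision to retain them so as to trade the dangerous denominator for the next one, and the same endgame via Lemma~\ref{lemma:sum-of-square-roots-inverses} and Lemma~\ref{lemma:useful}. Your reorganization into two parallel analyses (a $D$ branch and a $\sqrt{T-1}/\lambda$ branch, taking the better of the two final bounds) is a harmless variant of the paper's single analysis with $H=\min\{\sqrt{T-1}/\lambda,D\}$, and on the bounded branch it even yields $1.75/\lambda$ in place of $2.75/\lambda$. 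However, there is a genuine gap exactly where you yourself locate ``the main obstacle'': you never exhibit the mechanism by which the retained negative terms effect the off-by-one shift, and you misattribute that job to Lemmas~\ref{lemma:useful} and~\ref{lemma:recurrence-solution}. Both are purely scalar inequalities about nonnegative sequences; neither touches Fenchel conjugates, and Lemma~\ref{lemma:recurrence-solution} is not used in this proof at all (it belongs to \textsc{AdaFTRL}). The actual mechanism is an exact identity on the conjugate side, which is the one genuinely new step of the paper's argument: writing $\eta_t=(\sum_{i<t}\norm{\ell_i}_*^2)^{-1/2}$ and $R_t=R/\eta_t$,
\begin{align*}
\Breg_{R_t^*}(-L_t,-L_{t-1})
&- R_t^*(-L_t) + R_{t+1}^*(-L_t)
= \Breg_{R_{t+1}^*}(-L_t,-L_{t-1}) \\
&+ R_{t+1}^*(-L_{t-1}) - R_t^*(-L_{t-1})
+ \langle \grad R_t^*(-L_{t-1}) - \grad R_{t+1}^*(-L_{t-1}),\ \ell_t \rangle \; ,
\end{align*}
where the middle difference is nonpositive by monotonicity of conjugates, the new Bregman term enjoys the smoothness bound at scale $\eta_{t+1}$, and the inner product is controlled by H\"older together with properties 5, 7 and 8 of Proposition~\ref{proposition:conjugate-properties}, giving the correction $\min\left\{\frac{1}{\lambda}\norm{L_{t-1}}_*(\eta_t-\eta_{t+1}),\,D\right\}\norm{\ell_t}_*$. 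Without this identity (or an equivalent), your third paragraph describes what you hope happens rather than proving it.

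Two further consequences of the missing step. First, nothing in your sketch explains where $\sqrt{T-1}$ comes from; in the paper it arises from $\norm{L_{t-1}}_*(\eta_t-\eta_{t+1}) \le \norm{L_{t-1}}_*\,\eta_t \le \frac{\sum_{i<t}\norm{\ell_i}_*}{\sqrt{\sum_{i<t}\norm{\ell_i}_*^2}} \le \sqrt{t-1}$, i.e.\ from the correction term above. Second, your claim that after the shift ``each contribution is at worst $O(\norm{\ell_t}_*/\lambda)$'' is false in the unbounded case: the per-round correction is of order $\sqrt{t}\,\norm{\ell_t}_*/\lambda$, and the final bound survives only because the min structure fed into Lemma~\ref{lemma:useful} charges the $Ca_t$ branch a total of $3.5\,C\max_t a_t$ rather than $C\sum_t a_t$. (Relatedly, your second paragraph's claim that pairing strong smoothness with property 6 applied to $\Breg_{R_t^*}$ ``ultimately produces'' the $\min\{\sqrt{T-1}/\lambda,D\}$ factor is inaccurate: that pairing yields only the $D$ branch; the $\sqrt{T-1}/\lambda$ branch requires the gradient-difference correction.) These deferred steps are the substance of the paper's proof, so what you have is an accurate roadmap, not yet a proof.
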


When $K$ is unbounded, we pay a penalty that scales as $\max_{t \le T}
\|\ell_t\|_* \sqrt{T}$, that has the same magnitude of the first term in the
bound. On the other hand, when $K$ is bounded, the second term is a constant and
we can choose the optimal multiple of the regularizer.  We choose $R(w) =
\lambda f(w)$ where $f$ is a $1$-strongly convex function and optimize
$\lambda$.  The result of the optimization is
Corollary~\ref{corollary:regret-solo-ftrl-bounded-set}.

\begin{corollary}[Regret Bound for Bounded Decision Sets]
\label{corollary:regret-solo-ftrl-bounded-set}
Suppose $K \subseteq V$ is a non-empty bounded closed convex set.  Suppose that
$f:K \to \R$ is a non-negative lower semi-continuous function that is
$1$-strongly convex with respect to $\|\cdot\|$. \textsc{SOLO FTRL} with
regularizer
$$
R(w) = \frac{f(w)\sqrt{2.75}}{\sqrt{\sup_{v \in K} f(v)}}
\quad \text{satisfies} \quad
\Regret_T \le 13.3 \sqrt{\sup_{v \in K} f(v) \sum_{t=1}^{T} \norm{\ell_t}_*^2} \; .
$$
\end{corollary}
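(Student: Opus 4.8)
The plan is to instantiate Theorem~\ref{theorem:regret-solo-ftrl} with the one-parameter family of regularizers $R = \lambda f$, $\lambda > 0$, and then optimize over $\lambda$. Write $F = \sup_{v \in K} f(v)$, which is finite because $K$ is bounded and $f$ is $1$-strongly convex (indeed Proposition~\ref{proposition:diameter-vs-range} bounds the diameter in terms of $F$). Since $f$ is $1$-strongly convex, non-negative, and lower semi-continuous, the scaled function $R = \lambda f$ is $\lambda$-strongly convex, non-negative, and lower semi-continuous, so it meets the hypotheses of Theorem~\ref{theorem:regret-solo-ftrl}. Because $K$ is bounded, I would keep the $D$ branch of the minimum in the theorem, giving
\begin{equation*}
\Regret_T(u) \le \left( \lambda f(u) + \frac{2.75}{\lambda} \right) \sqrt{\sum_{t=1}^{T} \norm{\ell_t}_*^2} + 3.5\, D\, \max_{t \le T} \norm{\ell_t}_* .
\end{equation*}

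For the first term, bound $f(u) \le F$ uniformly over $u \in K$, so the leading coefficient is at most $\lambda F + 2.75/\lambda$. This expression is exactly what the choice of $\lambda$ is designed to minimize: by AM--GM (or a one-line calculus check) it is minimized at $\lambda^\star = \sqrt{2.75/F} = \sqrt{2.75}/\sqrt{F}$, which is precisely the multiplier appearing in the stated regularizer $R(w) = f(w)\sqrt{2.75}/\sqrt{F}$, and the minimum value is $2\sqrt{2.75\,F}$. Hence the first term is at most $2\sqrt{2.75}\,\sqrt{F \sum_{t=1}^T \norm{\ell_t}_*^2}$.

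For the second term I would discharge both the diameter and the maximum norm against $\sqrt{\sum_t \norm{\ell_t}_*^2}$. Proposition~\ref{proposition:diameter-vs-range} gives $D \le \sqrt{8F}$, and trivially $\max_{t \le T}\norm{\ell_t}_* \le \sqrt{\sum_{t=1}^T \norm{\ell_t}_*^2}$ since a single squared norm never exceeds the sum. Thus the second term is at most $3.5\sqrt{8}\,\sqrt{F \sum_t \norm{\ell_t}_*^2} = 7\sqrt{2}\,\sqrt{F \sum_t \norm{\ell_t}_*^2}$. Adding the two contributions and taking the supremum over $u \in K$ (the bound on $f(u)$ was uniform) yields $\Regret_T \le (2\sqrt{2.75} + 7\sqrt{2}) \sqrt{F \sum_t \norm{\ell_t}_*^2}$, and since $2\sqrt{2.75} + 7\sqrt{2} \approx 13.22 \le 13.3$ the claim follows.

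There is no deep obstacle here: the result is a direct specialization of Theorem~\ref{theorem:regret-solo-ftrl}. The only real content is the design choice of $\lambda$, which balances the $\lambda F$ and $2.75/\lambda$ pieces of the leading coefficient, and the use of Proposition~\ref{proposition:diameter-vs-range} to convert the diameter $D$ into a multiple of $\sqrt{F}$ so that the second term scales like the first. The sole thing to watch is the arithmetic bookkeeping of the constants, to confirm the final factor stays below the claimed $13.3$.
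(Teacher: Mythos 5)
Your proposal is correct and follows essentially the same route as the paper's own proof: both instantiate Theorem~\ref{theorem:regret-solo-ftrl} with a scalar multiple of $f$ (your $\lambda$ is the paper's $c/\sqrt{S}$), bound $f(u) \le \sup_{v \in K} f(v)$, control the second term via Proposition~\ref{proposition:diameter-vs-range} together with $\max_{t \le T}\|\ell_t\|_* \le \sqrt{\sum_{t=1}^T \|\ell_t\|_*^2}$, and optimize the scalar to get the constant $2\sqrt{2.75} + 7\sqrt{2} \approx 13.22 \le 13.3$. The paper phrases the last step as minimizing $g(c) = c + 2.75/c + 3.5\sqrt{8}$ at $c = \sqrt{2.75}$, which is exactly your AM--GM computation in different notation.
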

\begin{proof}
Let $S = \sup_{v \in K} f(v)$. Theorem~\ref{theorem:regret-solo-ftrl} applied
to the regularizer $R(w) = \frac{c}{\sqrt{S}} f(w)$, together with
Proposition~\ref{proposition:diameter-vs-range} and a crude bound
$\max_{t=1,2,\dots,T} \|\ell_t\|_* \le \sqrt{\sum_{t=1}^T \|\ell_t\|_*^2}$,
give
$$
\Regret_T \le \left(c + \frac{2.75}{c}  + 3.5\sqrt{8} \right) \sqrt{S \sum_{t=1}^{T} \norm{\ell_t}_*^2} \; .
$$
We choose $c$ by minimizing $g(c) = c + \frac{2.75}{c} + 3.5\sqrt{8}$. Clearly,
$g(c)$ has minimum at $c = \sqrt{2.75}$ and has minimal value $g(\sqrt{2.75}) =
2\sqrt{2.75} + 3.5\sqrt{8} \le 13.3$.
\end{proof}

\subsection{Proof of Regret Bound for \textsc{SOLO FTRL}}
\label{section:solo-ftrl-regret-bound}

The proof of Theorem~\ref{theorem:regret-solo-ftrl} relies on an inequality
(Lemma~\ref{lemma:useful}).  Related and weaker inequalities, like
Lemma~\ref{lemma:sum-of-square-roots-inverses}, were proved
in~\cite{Auer-Cesa-Bianchi-Gentile-2002} and~\cite{Jaksch-Ortner-Auer-2010}. The
main property of this inequality is that on the right-hand side $C$ does
\emph{not} multiply the $\sqrt{\sum_{t=1}^T a_t^2}$ term.

\begin{lemma}[Useful Inequality]
\label{lemma:useful}
Let $C, a_1, a_2, \dots, a_T \ge 0$. Then,
$$
\sum_{t=1}^T \min \left\{ \frac{a_t^2}{\sqrt{\sum_{i=1}^{t-1} a_i^2}}, \ C a_t \right\}
\le 3.5 \, C \max_{t=1,2,\dots,T} a_t \ + \ 3.5 \sqrt{\sum_{t=1}^T a_t^2} \; .
$$
\end{lemma}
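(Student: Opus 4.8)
The plan is to write $S_t=\sum_{i=1}^t a_i^2$ (with $S_0=0$), set $M=\max_t a_t$, and split the rounds into the \emph{good} set $\mathcal{G}=\{t:a_t^2\le S_{t-1}\}$ and the \emph{bad} set $\mathcal{H}=\{t:a_t^2>S_{t-1}\}$. On a good round I will bound the minimum by its first argument $\frac{a_t^2}{\sqrt{S_{t-1}}}$, and on a bad round by its second argument $Ca_t$. The whole point of the split is that the factor $C$ gets charged \emph{only} to the bad rounds, and I will show those contribute $O(C\,M)$ rather than $O(C\sqrt{S_T})$, which is exactly the asserted structure of the right-hand side.

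For the good rounds, $a_t^2\le S_{t-1}$ forces $S_t\le 2S_{t-1}$, hence $\sqrt{S_{t-1}}\ge \sqrt{S_t}/\sqrt2$ and $\frac{a_t^2}{\sqrt{S_{t-1}}}\le \sqrt2\,\frac{a_t^2}{\sqrt{S_t}}$. I then use the elementary telescoping inequality $\frac{a_t^2}{\sqrt{S_t}}\le 2(\sqrt{S_t}-\sqrt{S_{t-1}})$, which follows from $\sqrt{S_t}-\sqrt{S_{t-1}}=\frac{a_t^2}{\sqrt{S_t}+\sqrt{S_{t-1}}}\ge \frac{a_t^2}{2\sqrt{S_t}}$. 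Summing over \emph{all} $t$ (the extra, bad, rounds only add nonnegative telescoping increments) gives $\sum_{t\in\mathcal{G}}\frac{a_t^2}{\sqrt{S_{t-1}}}\le 2\sqrt2\,\sqrt{S_T}\le 3.5\,\sqrt{S_T}$.

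It remains to bound $C\sum_{t\in\mathcal{H}}a_t$. Listing the bad rounds as $\tau_1<\dots<\tau_m$ and writing $b_j=a_{\tau_j}$, $B_j=\sum_{i=1}^j b_i^2$, the key observation is that $S_{\tau_j-1}\ge \sum_{i<j}b_i^2=B_{j-1}$, so the defining inequality of a bad round collapses to the clean recurrence $b_j^2\ge B_{j-1}$. The crux is to prove, by induction on $m$, that $\sum_{j=1}^m b_j\le (1+\sqrt2)\sqrt{B_m}$: the inductive step reduces to verifying $(1+\sqrt2)x+y\le (1+\sqrt2)\sqrt{x^2+y^2}$ for all $y\ge x\ge 0$ (take $x=\sqrt{B_{m-1}}$, $y=b_m$), and this holds because the difference of the two sides vanishes at $y=x$ and is nondecreasing in $y$ for $y\ge x$. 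Finally $b_m^2\ge B_{m-1}=B_m-b_m^2$ yields $\sqrt{B_m}\le \sqrt2\,b_m\le \sqrt2\,M$, so $\sum_{t\in\mathcal{H}}a_t\le (1+\sqrt2)\sqrt2\,M=(2+\sqrt2)M\le 3.5\,M$. Adding the two estimates proves the lemma.

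The main obstacle is precisely this bad-round estimate. The naive route---bounding $b_j\le\sqrt{B_j}$, summing the resulting geometric series, and only then using $\sqrt{B_m}\le\sqrt2\,b_m$---is lossy and produces the too-large constant $2\sqrt2+2\approx 4.83$ in front of $C\,M$, which would break the claimed $3.5$. The sharpened inductive bound $\sum_j b_j\le (1+\sqrt2)\sqrt{B_m}$ is what pulls the constant down to $2+\sqrt2<3.5$, and it is the only nonroutine ingredient; everything else is the standard square-root telescoping together with the bookkeeping of the good/bad partition.
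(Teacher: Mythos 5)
Your proof is correct, and its key step is genuinely different from the paper's. Both arguments split the rounds by comparing $a_t^2$ to a multiple of $S_{t-1}=\sum_{i<t}a_i^2$, and both handle the small rounds the same way, by telescoping $\sqrt{S_t}$ (your constant $2\sqrt{2}$ is even slightly better than the paper's $2\sqrt{1+\alpha^2}=2\sqrt{3}$, because you can afford the threshold $\alpha=1$). The real difference is in the large rounds. The paper deliberately uses the threshold $a_t^2>\alpha^2 S_{t-1}$ with $\alpha=\sqrt{2}>1$ so that the term $Ca_t$ can be charged to increments of the running maximum: from $\sqrt{S_{t-1}}\ge M_{t-1}$ and $a_t=M_t$ it gets $Ca_t\le \frac{C\alpha}{\alpha-1}\left(M_t-M_{t-1}\right)$, which telescopes to $\frac{C\alpha}{\alpha-1}\max_t a_t=(2+\sqrt{2})\,C\max_t a_t$; that argument degenerates as $\alpha\to 1$ because of the factor $\frac{1}{\alpha-1}$. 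You instead keep $\alpha=1$ and control the bad subsequence by induction: the recurrence $b_j^2\ge B_{j-1}$ forces geometric growth of $B_j$, and your sharpened inequality $\sum_{j}b_j\le(1+\sqrt{2})\sqrt{B_m}$, combined with $B_m\le 2b_m^2$, lands on the same constant $2+\sqrt{2}$. What each buys: the paper's max-telescoping is a one-line estimate and is parametric in $\alpha$, while your induction is self-contained on the bad subsequence and yields a marginally smaller overall constant ($\max\{2\sqrt{2},\,2+\sqrt{2}\}=2+\sqrt{2}\approx 3.41$ versus the paper's $\max\{2\sqrt{3},\,2+\sqrt{2}\}=2\sqrt{3}\approx 3.46$), both below $3.5$. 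One cosmetic point: on a good round with $S_{t-1}=0$ the first argument of the min reads $0/0$; as in the paper, dispose of this by assuming without loss of generality that all $a_t>0$, since such rounds have $a_t=0$ and contribute nothing to either side.
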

\begin{proof}
Without loss of generality, we can assume that $a_t > 0$ for all $t$. Since otherwise we
can remove all $a_t = 0$ without affecting either side of the inequality. Let $M_t = \max\{a_1, a_2, \dots, a_t\}$ and $M_0 = 0$.
We prove that for any $\alpha > 1$
$$
\min\left\{ \frac{a_t^2}{\sqrt{\sum_{i=1}^{t-1} a_i^2}}, C a_t \right\}
\le 2 \sqrt{1+\alpha^2} \left( \sqrt{\sum_{i=1}^t a_i^2} - \sqrt{\sum_{i=1}^{t-1} a_i^2} \right) + \frac{C\alpha( M_t  - M_{t-1})}{\alpha - 1}
$$
from which the inequality follows by summing over $t=1,2,\dots,T$ and choosing $\alpha = \sqrt{2}$.
The inequality follows by case analysis. If $a_t^2 \le \alpha^2 \sum_{i=1}^{t-1} a_i^2$, we have
\begin{multline*}
\min\left\{ \frac{a_t^2}{\sqrt{\sum_{i=1}^{t-1} a_i^2}}, C a_t \right\}
\le \frac{a_t^2}{\sqrt{\sum_{i=1}^{t-1} a_i^2}}
= \frac{a_t^2}{\sqrt{\frac{1}{1+\alpha^2} \left( \alpha^2 \sum_{i=1}^{t-1} a_i^2 + \sum_{i=1}^{t-1} a_i^2 \right)}} \\
\le \frac{a_t^2\sqrt{1+\alpha^2}}{\sqrt{ a_t^2 + \sum_{i=1}^{t-1} a_i^2 }}
= \frac{a_t^2\sqrt{1+\alpha^2}}{\sqrt{\sum_{i=1}^t a_i^2}}
\le 2\sqrt{1+\alpha^2} \left( \sqrt{\sum_{i=1}^t a_i^2} - \sqrt{\sum_{i=1}^{t-1} a_i^2} \right)
\end{multline*}
where we have used $x^2/\sqrt{x^2+y^2} \le 2(\sqrt{x^2+y^2} - \sqrt{y^2})$ in the last step.
On the other hand, if $a_t^2 > \alpha^2 \sum_{t=1}^{t-1} a_i^2$, we have
\begin{multline*}
\min\left\{ \frac{a_t^2}{\sqrt{\sum_{i=1}^{t-1} a_i^2}}, \ C a_t \right\}
\le C a_t
= C \frac{\alpha a_t  - a_t}{\alpha - 1}
\le \frac{C}{\alpha - 1} \left( \alpha a_t  - \alpha \sqrt{\sum_{i=1}^{t-1} a_i^2} \right) \\
= \frac{C\alpha}{\alpha - 1} \left( a_t  - \sqrt{\sum_{i=1}^{t-1} a_i^2} \right)
\le \frac{C\alpha}{\alpha - 1} \left( a_t  - M_{t-1} \right)
= \frac{C\alpha}{\alpha - 1} \left( M_t  - M_{t-1} \right)
\end{multline*}
where we have used that $a_t = M_t$ and $\sqrt{\sum_{i=1}^{t-1} a_i^2} \ge M_{t-1}$.
\end{proof}

\begin{lemma}[{{\cite[Lemma~3.5]{Auer-Cesa-Bianchi-Gentile-2002}}}]
\label{lemma:sum-of-square-roots-inverses}
Let $a_1, a_2, \dots, a_T$ be non-negative real numbers. If $a_1 > 0$ then,
$$
\sum_{t=1}^T \frac{a_t}{\sqrt{\sum_{i=1}^t a_i}} \le 2 \sqrt{\sum_{t=1}^T a_t} \; .
$$
\end{lemma}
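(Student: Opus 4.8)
The plan is to reduce the sum to a telescoping series via an elementary per-term bound. First I would introduce the partial sums $S_t = \sum_{i=1}^t a_i$ with the convention $S_0 = 0$, so that $a_t = S_t - S_{t-1}$ and the target inequality reads $\sum_{t=1}^T (S_t - S_{t-1})/\sqrt{S_t} \le 2\sqrt{S_T}$. Before anything else, I would record that the hypothesis $a_1 > 0$ together with non-negativity of the $a_i$ guarantees $S_t \ge a_1 > 0$ for every $t \ge 1$, so each summand is well-defined; only $\sqrt{S_t}$, which is strictly positive, ever appears in a denominator.

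The heart of the argument is the single-term inequality
$$
\frac{a_t}{\sqrt{S_t}} \le 2\left(\sqrt{S_t} - \sqrt{S_{t-1}}\right),
$$
which I would establish by rationalizing the right-hand side: since $a_t = S_t - S_{t-1}$,
$$
\sqrt{S_t} - \sqrt{S_{t-1}} = \frac{a_t}{\sqrt{S_t} + \sqrt{S_{t-1}}} \ge \frac{a_t}{2\sqrt{S_t}},
$$
where the last step uses $\sqrt{S_{t-1}} \le \sqrt{S_t}$. This is exactly the unsquared analogue of the step $x^2/\sqrt{x^2+y^2} \le 2(\sqrt{x^2+y^2} - \sqrt{y^2})$ already invoked in the proof of Lemma~\ref{lemma:useful}, so the same idea reappears here in a simpler guise.

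Finally I would sum the per-term bound over $t = 1, 2, \dots, T$; the right-hand side telescopes to $2(\sqrt{S_T} - \sqrt{S_0}) = 2\sqrt{S_T} = 2\sqrt{\sum_{t=1}^T a_t}$, which is the claimed bound. I do not anticipate any genuine obstacle: the argument is a clean telescoping estimate. The only point requiring a moment's care is well-definedness at the boundary, which is precisely why the hypothesis $a_1 > 0$ is imposed — it keeps every denominator strictly positive, while the harmless value $\sqrt{S_0} = 0$ surfaces only inside the telescoped sum, where it does no damage.
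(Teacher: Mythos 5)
Your proof is correct and follows essentially the same route as the paper: the identical per-term bound $a_t/\sqrt{S_t} \le 2(\sqrt{S_t} - \sqrt{S_{t-1}})$ followed by telescoping, with the only difference being that you derive the elementary inequality by rationalization while the paper simply cites it. No issues.
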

For completeness, a proof of Lemma~\ref{lemma:sum-of-square-roots-inverses} is
in~\ref{section:solo-ftrl-proofs}.

\begin{proof}[Proof of Theorem~\ref{theorem:regret-solo-ftrl}]
Let $\eta_t = \frac{1}{\sqrt{\sum_{i=1}^{t-1} \|\ell_i\|_*^2}}$, hence $R_t(w)
= \frac{1}{\eta_t} R(w)$.  We assume without loss of generality that
$\|\ell_t\|_* > 0$ for all $t$, since otherwise we can remove all rounds $t$
where $\ell_t = 0$ without affecting the regret and the predictions of the
algorithm on the remaining rounds.  By Lemma~\ref{lemma:generic-regret-bound},
\begin{align*}
\Regret_T(u)
& \le \frac{1}{\eta_{T+1}} R(u) + \sum_{t=1}^T \left( \Breg_{R_t^*}(-L_t, -L_{t-1}) - R_t^*(-L_t) + R_{t+1}^*(-L_t) \right) \; .
\end{align*}
We upper bound the terms of the sum in two different ways.
First, by Proposition~\ref{proposition:conjugate-properties}, we have
$$
\Breg_{R_t^*}(-L_t, -L_{t-1}) - R_t^*(-L_t) + R_{t+1}^*(-L_t)
\le \Breg_{R_t^*}(-L_t, -L_{t-1})
\le \frac{\eta_t \|\ell_t\|_*^2}{2\lambda} \; .
$$
Second, we have
\begin{align*}
\Breg_{R_t^*}&(-L_t, -L_{t-1}) - R_t^*(-L_t) + R_{t+1}^*(-L_t) \\
& = \Breg_{R_{t+1}^*}(-L_t, -L_{t-1}) + R^*_{t+1}(-L_{t-1}) - R_t^*(-L_{t-1}) \\
& \qquad + \langle \nabla R_t^*(-L_{t-1})-\nabla R_{t+1}^*(-L_{t-1}), \ell_t \rangle  \\
& \le \frac{\eta_{t+1} \|\ell_t\|_*^2}{2\lambda} + \| \nabla R_t^*(-L_{t-1})-\nabla R_{t+1}^*(-L_{t-1})\| \cdot \|\ell_t\|_* \\
& = \frac{\eta_{t+1} \|\ell_t\|_*^2}{2\lambda} + \| \nabla R^*(- \eta_{t} L_{t-1})-\nabla R^*(- \eta_{t+1} L_{t-1})\| \cdot \|\ell_t\|_* \\
& \le \frac{\eta_{t+1} \|\ell_t\|_*^2}{2\lambda} + \min\left\{\frac{1}{\lambda} \|L_{t-1}\|_* \left(\eta_{t} - \eta_{t+1} \right), D\right\} \|\ell_t\|_* \; ,
\end{align*}
where in the first inequality we have used the fact that $R^*_{t+1}(-L_{t-1})
\le R_t^*(-L_{t-1})$, H\"older's inequality, and
Proposition~\ref{proposition:conjugate-properties}.  In the second inequality
we have used properties 5 and 7 of
Proposition~\ref{proposition:conjugate-properties}. Using the definition of
$\eta_{t+1}$ we have
\begin{align*}
\frac{\|L_{t-1}\|_* (\eta_{t} -\eta_{t+1})}{\lambda}
\le \frac{ \|L_{t-1}\|_*}{\lambda \sqrt{\sum_{i=1}^{t-1} \|\ell_i\|_*^2}}
\le \frac{ \sum_{i=1}^{t-1} \|\ell_i\|_*}{\lambda \sqrt{\sum_{i=1}^{t-1} \|\ell_i\|_*^2}}
\le \frac{\sqrt{t-1}}{\lambda}
\le \frac{\sqrt{T-1}}{\lambda}.
\end{align*}
Denoting by $H=\min\left\{\frac{\sqrt{T-1}}{\lambda},D\right\}$ we have
\begin{align*}
& \Regret_T(u)
\le \frac{1}{\eta_{T+1}} R(u) + \sum_{t=1}^T \min\left\{ \frac{\eta_t \|\ell_t\|_*^2}{2\lambda}, \ H \|\ell_t\|_* + \frac{\eta_{t+1} \|\ell_t\|_*^2}{2\lambda}  \right\} \\
& \le \frac{1}{\eta_{T+1}} R(u) + \frac{1}{2\lambda} \sum_{t=1}^T  \eta_{t+1} \|\ell_t\|_*^2 + \frac{1}{2\lambda} \sum_{t=1}^T \min\left\{ \eta_t \|\ell_t\|_*^2, \ 2 \lambda H \|\ell_t\|_* \right\} \\
& = \frac{1}{\eta_{T+1}} R(u) + \frac{1}{2\lambda} \sum_{t=1}^T  \frac{\|\ell_t\|_*^2}{\sqrt{\sum_{i=1}^t \|\ell_i\|_*^2}} + \frac{1}{2 \lambda} \sum_{t=1}^T \min\left\{ \frac{\|\ell_t\|_*^2}{\sqrt{\sum_{i=1}^{t-1} \|\ell_i\|_*^2}}, \ 2 \lambda H \|\ell_t\|_* \right\} \; .
\end{align*}
We bound each of the three terms separately. By definition of $\eta_{T+1}$, the
first term is $\frac{1}{\eta_{T+1}} R(u) = R(u) \sqrt{\sum_{t=1}^T
\|\ell_t\|_*^2}$.  We upper bound the second term using
Lemma~\ref{lemma:sum-of-square-roots-inverses} as
$$
\frac{1}{2\lambda} \sum_{t=1}^T  \frac{\|\ell_t\|_*^2}{\sqrt{\sum_{i=1}^t \|\ell_i\|_*^2}}
\le \frac{1}{\lambda} \sqrt{\sum_{t=1}^T \|\ell_t\|_*^2} \; .
$$
Finally, by Lemma~\ref{lemma:useful} we upper bound the third term as
$$
\frac{1}{2 \lambda} \sum_{t=1}^T \min\left\{ \frac{\|\ell_t\|_*^2}{\sqrt{\sum_{i=1}^{t-1} \|\ell_i\|_*^2}}, \ 2 \lambda \|\ell_t\|_* H \right\}
\le 3.5 H \max_{t \le T} \|\ell_t\|_* + \frac{1.75}{\lambda} \sqrt{\sum_{t=1}^T \|\ell_t\|_*^2} \; .
$$
Putting everything together gives the stated bound.
\end{proof}

\section{Scale-Free Mirror Descent}
\label{section:mirror-descent}

In this section, we analyze scale-free version of \textsc{Mirror Descent}.
Our algorithm uses the regularizer
\begin{equation}
\label{equation:scale-free-mirror-descent}
R_t(w) = R(w) \sqrt{\sum_{i=1}^t \|\ell_i\|_*^2} \; ,
\end{equation}
where $R:K \to \R$ an arbitrary strongly convex function.  As for \textsc{SOLO
FTRL}, it is easy to see that such regularizer gives rise to predictions $w_t$
that are scale-free.  We call the resulting algorithm \textsc{Scale-Free MD}.
Similar to \textsc{SOLO FTRL}, the regularizer
\eqref{equation:scale-free-mirror-descent} does not uniquely define the
\textsc{MD} minimizer $w_{t+1} = \argmin_{w \in K} \left(\langle \ell_t, w
\rangle + \Breg_{R_t}(w, w_t) \right)$ when $\sqrt{\sum_{i=1}^t
\|\ell_i\|_*^2}$ is zero.  This happens when the loss vectors $\ell_1, \ell_2,
\dots, \ell_t$ are all zero. In this case, we define $w_{t+1} = \argmin_{w \in
K} R(w)$ which agrees with $w_{t+1} = \lim_{a \to 0^+} \argmin_{w \in K} a
\Breg_{R}(w, w_t)$.  Similarly, $w_1 = \argmin_{w \in K} R(w)$.

Per-coordinate version of \textsc{Scale-Free MD} with regularizer $R(w) =
\frac{1}{2}\norm{w}_2^2$ is exactly the same algorithm as the diagonal version
of \textsc{AdaGrad MD}~\cite{Duchi-Hazan-Singer-2011}.

The theorem below upper bounds the regret of \textsc{Scale-Free MD} (see also
\cite{Duchi-Hazan-Singer-2011, Duchi-Shalev-Shwartz-Singer-Tewari-2010,
Rakhlin-Sridharan-2013}).  The proof is in~\ref{section:mirror-descent-proofs}.

\begin{theorem}[Regret of Scale-Free Mirror Descent]
\label{theorem:regret-scale-free-mirror-descent}
Suppose $K \subseteq V$ is a non-empty closed convex set. Suppose that $R:K
\to \R$ is a $\lambda$-strongly convex function with respect to a norm
$\|\cdot\|$.  \textsc{Scale-Free MD} with regularizer $R$ satisfies
for any $u \in K$,
$$
\Regret_T(u)
\le
\left( \frac{1}{\lambda} + \sup_{v \in K} \Breg_R(u,v) \right) \sqrt{\sum_{t=1}^T \|\ell_t\|_*^2} \; .
$$
\end{theorem}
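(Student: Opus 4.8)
The plan is to start from the generic \textsc{MD} regret bound (Lemma~\ref{lemma:mirror-descent-regret}) and exploit the fact that the regularizer factors as $R_t = \beta_t R$ with $\beta_t = \sqrt{\sum_{i=1}^t \norm{\ell_i}_*^2}$, so that $\Breg_{R_t}(\cdot,\cdot) = \beta_t \Breg_R(\cdot,\cdot)$. As usual I would first assume without loss of generality that $\norm{\ell_t}_* > 0$ for all $t$, discarding the zero rounds, which guarantees $\beta_t > 0$ and that $\beta_t$ is nondecreasing. Substituting into Lemma~\ref{lemma:mirror-descent-regret} gives
$$
\Regret_T(u) \le \sum_{t=1}^T \left( \langle \ell_t, w_t - w_{t+1} \rangle - \beta_t \Breg_R(w_{t+1}, w_t) \right) + \sum_{t=1}^T \beta_t \left( \Breg_R(u, w_t) - \Breg_R(u, w_{t+1}) \right),
$$
and I would bound the two sums separately.

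For the first (stability) sum, the key step is that the strong convexity of $R$ absorbs the linear inner-product term. Since $\Breg_R(w_{t+1}, w_t) \ge \frac{\lambda}{2}\norm{w_{t+1}-w_t}^2$, I would apply Cauchy--Schwarz, $\langle \ell_t, w_t - w_{t+1}\rangle \le \norm{\ell_t}_* \norm{w_t - w_{t+1}}$, followed by the elementary inequality $ab - \frac{c}{2}b^2 \le \frac{a^2}{2c}$ (maximizing over $b = \norm{w_t-w_{t+1}}$ with $a=\norm{\ell_t}_*$, $c=\lambda\beta_t$), obtaining
$$
\langle \ell_t, w_t - w_{t+1} \rangle - \beta_t \Breg_R(w_{t+1}, w_t) \le \frac{\norm{\ell_t}_*^2}{2\lambda \beta_t} = \frac{1}{2\lambda}\,\frac{\norm{\ell_t}_*^2}{\sqrt{\sum_{i=1}^t \norm{\ell_i}_*^2}}.
$$
Summing over $t$ and invoking Lemma~\ref{lemma:sum-of-square-roots-inverses} with $a_t = \norm{\ell_t}_*^2$ bounds this first sum by $\frac{1}{\lambda}\sqrt{\sum_{t=1}^T \norm{\ell_t}_*^2}$, which yields the $\frac{1}{\lambda}$ coefficient in the theorem.

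For the second (telescoping) sum, I would rewrite it by Abel summation. Setting $b_t = \Breg_R(u, w_t)$, the sum $\sum_t \beta_t(b_t - b_{t+1})$ equals $\beta_1 b_1 + \sum_{t=2}^T (\beta_t - \beta_{t-1}) b_t - \beta_T b_{T+1}$. Because $b_{T+1} = \Breg_R(u, w_{T+1}) \ge 0$, the last term may be dropped; because $\beta_t$ is nondecreasing, $\beta_1$ and every coefficient $\beta_t - \beta_{t-1}$ is nonnegative, so I would bound each $b_t$ by $\sup_{v \in K}\Breg_R(u,v)$. The resulting weights telescope to $\beta_T = \sqrt{\sum_{t=1}^T \norm{\ell_t}_*^2}$, giving the bound $\sup_{v\in K}\Breg_R(u,v)\sqrt{\sum_{t=1}^T \norm{\ell_t}_*^2}$ for the second sum. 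Adding the two bounds gives the theorem. I expect the main obstacle to be the careful handling of this telescoping sum: the non-constant weights $\beta_t$ block a direct telescope, so the Abel-summation rearrangement together with the monotonicity of $\beta_t$ and the discarding of the nonnegative final Bregman term is the crucial maneuver. The stability sum, by contrast, is a standard strong-convexity argument reduced to Lemma~\ref{lemma:sum-of-square-roots-inverses}.
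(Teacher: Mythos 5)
Your proposal is correct and follows essentially the same route as the paper's proof: the same decomposition via Lemma~\ref{lemma:mirror-descent-regret}, the same strong-convexity/H\"older bound yielding $\frac{\norm{\ell_t}_*^2}{2\lambda\beta_t}$ for the stability terms (summed via Lemma~\ref{lemma:sum-of-square-roots-inverses}), and the same Abel-summation rearrangement of the Bregman differences with nondecreasing weights, dropping the nonnegative final term. The only cosmetic differences are that the paper works with $\eta_t = 1/\beta_t$ and writes the rearrangement implicitly, and what you call Cauchy--Schwarz is H\"older's inequality for the dual norm pair.
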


We choose the regularizer $R(w) = \lambda f(w)$ where $f$ is a $1$-strongly convex
function and optimize $\lambda$. The result is the following Corollary. Its
proof is trivial.

\begin{corollary}[Regret of Scale-Free Mirror Descent]
\label{corollary:regret-scale-free-mirror-descent}
Suppose $K \subseteq V$ is a non-empty bounded closed convex set.  Suppose
that $f:K \to \R$ is a $1$-strongly convex function with respect to a norm
$\|\cdot\|$.  \textsc{Scale-Free MD} with regularizer
$$
R(w) = \frac{f(w)}{\displaystyle \sqrt{\sup_{u,v \in K} \Breg_f(u,v)}}
\quad \text{satisfies} \quad %
\Regret_T \le 2 \sqrt{\sup_{u,v \in K} \Breg_f(u,v) \sum_{t=1}^T \|\ell_t\|_*^2} \; .
$$
\end{corollary}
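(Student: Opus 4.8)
The plan is to derive the corollary directly from Theorem~\ref{theorem:regret-scale-free-mirror-descent} by substituting the prescribed regularizer and verifying that the two terms of the general bound balance. First I would abbreviate $S = \sup_{u,v \in K} \Breg_f(u,v)$, which is finite and positive because $K$ is bounded (and contains more than one point, else the regret is trivially zero), and rewrite the stated regularizer as $R(w) = f(w)/\sqrt{S}$. Since $f$ is $1$-strongly convex with respect to $\|\cdot\|$ and the strong-convexity parameter scales linearly under multiplication by a positive constant, $R$ is $\lambda$-strongly convex with $\lambda = 1/\sqrt{S}$; in particular $1/\lambda = \sqrt{S}$.

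Next I would exploit the linearity of the Bregman divergence in its generating function. Directly from the definition, $\Breg_R(u,v) = R(u) - R(v) - \langle \grad R(v), u-v \rangle = \tfrac{1}{\sqrt{S}} \Breg_f(u,v)$, so for every $u \in K$ we get $\sup_{v \in K} \Breg_R(u,v) = \tfrac{1}{\sqrt{S}} \sup_{v \in K} \Breg_f(u,v) \le \tfrac{1}{\sqrt{S}} \cdot S = \sqrt{S}$, using that $\sup_{v \in K} \Breg_f(u,v) \le S$ for each fixed $u$. Substituting $1/\lambda = \sqrt{S}$ and $\sup_{v \in K} \Breg_R(u,v) \le \sqrt{S}$ into Theorem~\ref{theorem:regret-scale-free-mirror-descent} then yields, for every $u \in K$,
$$
\Regret_T(u) \le \left( \sqrt{S} + \sqrt{S} \right) \sqrt{\sum_{t=1}^T \|\ell_t\|_*^2} = 2 \sqrt{S \sum_{t=1}^T \|\ell_t\|_*^2} \; ,
$$
and taking the supremum over $u \in K$ gives the claimed bound.

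There is essentially no obstacle here, which is why the text labels the proof trivial: the choice $R = f/\sqrt{S}$ is exactly the optimizer of the prefactor. Writing the generic scaling $R = \lambda f$ gives prefactor $\tfrac{1}{\lambda} + \lambda S$ (the two scaling facts above), and minimizing over $\lambda > 0$ yields $\lambda = 1/\sqrt{S}$ with minimal value $2\sqrt{S}$, reproducing both the regularizer and the constant $2$. The only points that merit a word of care are confirming that $S < \infty$ on a bounded domain so that $R$ is well defined, and that both the strong-convexity parameter and the Bregman divergence scale linearly with the positive multiple $1/\sqrt{S}$; everything else is substitution.
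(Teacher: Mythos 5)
Your proof is correct and is exactly the argument the paper has in mind when it declares the corollary's proof ``trivial'': substitute $R = \lambda f$ into Theorem~\ref{theorem:regret-scale-free-mirror-descent}, use that the strong-convexity parameter and the Bregman divergence both scale linearly in the multiplier, and note that $\lambda = 1/\sqrt{S}$ minimizes the prefactor $\tfrac{1}{\lambda} + \lambda S$ at the value $2\sqrt{S}$. One small correction to a side remark: finiteness of $S = \sup_{u,v \in K} \Breg_f(u,v)$ does \emph{not} follow from boundedness of $K$ (the paper's own example of negative entropy on the probability simplex has bounded $K$ yet $S = +\infty$); it is an implicit hypothesis needed for the regularizer to be well defined, although when $S = +\infty$ the stated bound holds vacuously.
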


The regret bound for \textsc{Scale-Free MD} in the
Corollary~\ref{corollary:regret-scale-free-mirror-descent} depends on
$\sup_{u,v \in K} \Breg_f(u,v)$. In contrast, the regret bound for
\textsc{SOLO FTRL} in Corollary~\ref{corollary:regret-solo-ftrl-bounded-set}
depend on $\sup_{u \in K} f(u)$.  Similarly, the regret bound in
Theorem~\ref{theorem:regret-scale-free-mirror-descent} for \textsc{Scale-Free
MD} depends on $\sup_{v \in K} \Breg_R(u,v)$ and the regret bounds in
Theorem~\ref{theorem:regret-solo-ftrl}
for \textsc{SOLO FTRL} depend on $R(u)$. It is not hard to
show that
\begin{equation}
\label{equation:regularizer-vs-divergence}
\forall u \in K \qquad R(u) \le \sup_{v \in K} \Breg_R(u,v) \; ,
\end{equation}
provided that at the minimizer $v^* = \argmin_{v \in K} R(v)$ both $R(v^*)$ and
$\grad R(v^*)$ are zero. Indeed, in that case, $R(u) = \Breg_R(u,v^*) \le
\sup_{v \in K} \Breg_R(u,v)$.

The assumption $R(v^*) = 0$ and $\grad R(v^*) = 0$
are easy to achieve by adding an affine function to the regularizer:
$$
R'(u) = R(u) - \langle \grad R(v^*), u - v^* \rangle - R(v^*) \; .
$$
The regularizer $R'$ has the same parameter of strong convexity as $R$, the
associated Bregman divergences $\Breg_{R'}$ and $\Breg_{R}$ are equal, $R'$ and
$R$ have the same minimizer $v^*$, and $R'(v^*)$ and $\grad R'(v^*)$ are both
zero.

Thus, inequality \eqref{equation:regularizer-vs-divergence} implies
that---ignoring constant factors---the regret bound for \textsc{Scale-Free MD}
is inferior to the regret bound for \textsc{SOLO FTRL}.
In fact, it is not hard to come up with examples where $R(u)$ is finite whereas
$\sup_{v \in K} \Breg_R(u,v)$ is infinite. We mention two such examples. The
first example is $R(w) = \frac{1}{2}\|w\|_2^2$ defined on the whole space $V$,
where for any $u \in V$, $R(u)$ is a finite value but $\sup_{v \in K}
\Breg_R(u,v) = \sup_{v \in V} \frac{1}{2}\|u-v\|_2^2 = +\infty$. The second
example is the shifted negative entropy regularizer $R(w) = \ln(d) +
\sum_{j=1}^d w_j \ln w_j$ defined on the $d$-dimensional probability simplex $K
= \{ w \in \R^d ~:~ w_j \ge 0, \sum_{j=1}^d w_j = 1 \}$, where for any $u \in
K$, $R(u)$ is finite and in fact lies in the interval $[0, \ln d]$ but
$\sup_{v \in K} \Breg_R(u,v) = \sup_{v \in K} \sum_{j=1}^d u_j \ln(u_j/v_j) = +
\infty$. We revisit these examples in the following subsection.

\subsection{Lower Bounds for Scale-Free Mirror Descent}
\label{subsection:mirror-descent-lower-bound}

The bounds in Theorem~\ref{theorem:regret-scale-free-mirror-descent} and
Corollary~\ref{corollary:regret-scale-free-mirror-descent} are vacuous when
$\Breg_R(u,v)$ is not bounded. One might wonder if the assumption that
$\Breg_R(u,v)$ is bounded is necessary in order for \textsc{Scale-Free MD} to
have a sublinear regret. We show necessity of this assumption on two
counter-examples.  In these counter-examples, we consider strongly convex
regularizers $R$ such that $\Breg_R(u,v)$ is not bounded and we construct
sequences of loss vectors $\ell_1, \ell_2, \dots, \ell_T$ such that
$\|\ell_1\|_* = \|\ell_2\|_* = \dots = \|\ell_T\|_* = 1$ and \textsc{Scale-Free
MD} has regret $\Omega(T)$ or worse.

The first counter-example is stated as
Theorem~\ref{theorem:first-counter-example} below; our proof is
in~\ref{section:mirror-descent-proofs}. The decision set is the whole space
$K=V$ and the regularizer is $R(w) = \frac{1}{2}\|w\|_2^2$. Note that $R(w)$ is
$1$-strongly convex with respect to $\|\cdot\|_2$ and the dual norm of
$\|\cdot\|_2$ is $\|\cdot\|_2$. The corresponding Bregman divergence is
$\Breg_R(u,v) = \frac{1}{2}\|u-v\|_2^2$. The counter-example constructs a
sequence of unit-norm loss vectors in the one-dimensional subspace spanned by
the first vector of the standard orthnormal basis.  On such a sequence, both
versions of \textsc{AdaGrad MD} as well as  \textsc{Scale-Free MD} are identical
to gradient descent with step size $1/\sqrt{t}$, i.e., they are identical
Zinkevich's \textsc{Generalized Infinitesimal Gradient Ascent} (\textsc{GIGA})
algorithm~\cite{Zinkevich-2003}. Hence the lower bound applies to all these
algorithms.

\begin{theorem}[First Counter-Example]
\label{theorem:first-counter-example}
Suppose $K = V$. For any $T \ge 42$, there exists a sequence of loss vectors
$\ell_1, \ell_2, \dots, \ell_T \in V^*$ such that $\|\ell_1\|_2 = \|\ell_2\|_2
= \dots = \|\ell_T\|_2 = 1$ and \textsc{Scale-Free MD} with
regularizer $R(w) = \frac{1}{2}\|w\|_2^2$, \textsc{GIGA},
and both versions of \textsc{AdaGrad MD} satisfy
$$
\Regret_T(0) \ge \frac{T^{3/2}}{20} \; .
$$
\end{theorem}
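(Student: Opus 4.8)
The plan is to reduce everything to a one-dimensional gradient-descent recursion and then feed it a two-phase adversary that first pushes the iterate far from the origin and then harvests a large loss from that displacement while the competitor $u=0$ pays nothing. First I would reduce to the scalar case. Since $w_1 = \argmin_{w} \tfrac12\|w\|_2^2 = 0$ and each $\ell_t$ will be chosen in $\mathrm{span}(e_1)$, an easy induction shows every $w_t$ stays on this line, so I may write $\ell_t = \sigma_t e_1$ with $\sigma_t \in \{-1,+1\}$, identify $w_t$ with its scalar coordinate $x_t$, and note $\|\ell_t\|_* = 1$. Plugging the regularizer $R_t(w) = \tfrac{\sqrt t}{2}\|w\|_2^2$ (here $\sum_{i\le t}\|\ell_i\|_*^2 = t$) into the update in line~7 of Algorithm~\ref{algorithm:mirror-descent-varying-regularizer} and solving the resulting quadratic gives the closed form $x_{t+1} = x_t - \sigma_t/\sqrt t$; this is exactly \textsc{GIGA} and both versions of \textsc{AdaGrad MD} with step size $1/\sqrt t$, so a single sequence defeats all four algorithms at once. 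Unrolling yields $x_t = -\sum_{i=1}^{t-1}\sigma_i/\sqrt i$ and hence $\Regret_T(0) = \sum_{t=1}^T \sigma_t x_t$.

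Next I would fix the adversary: $\sigma_t = +1$ for $t \le m$ (phase one) and $\sigma_t = -1$ for $m < t \le T$ (phase two), for a split point $m$ to be chosen as a constant fraction of $T$. In phase one the iterate is driven negative, $x_t = -S_{t-1}$ with $S_t := \sum_{i=1}^t i^{-1/2}$, so each term $\sigma_t x_t = -S_{t-1}$ is a cost; in phase two $x_t = S_{t-1} - 2S_m$, and because the step sizes $1/\sqrt t$ have shrunk, the iterate can climb back only by $O(\sqrt T - \sqrt m)$ and so stays negative, making every term $\sigma_t x_t = 2S_m - S_{t-1} = |x_t|$ a gain of order $\sqrt T$. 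Telescoping the two phases collapses to the clean identity
$$
\Regret_T(0) = 2(T-m)\,S_m - \sum_{t=1}^{T-1} S_t \; ,
$$
which is the key computational step and reduces the problem to estimating elementary sums.

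To finish, I would use the integral comparisons $2\sqrt{n+1}-2 \le S_n \le 2\sqrt n - 1$ and $\sum_{i=1}^{n}\sqrt i \le \tfrac23(n+1)^{3/2}$ to evaluate $2(T-m)S_m - \sum_{t=1}^{T-1}S_t = 2(T-m)S_m - \big(T S_{T-1} - \sum_{i=1}^{T-1}\sqrt i\big)$. Writing $m = \beta T$, the leading behaviour is $T^{3/2}\big(4(1-\beta)\sqrt\beta - \tfrac43\big)$, which is maximized at $\beta = 1/3$ with value $\approx 0.2\,T^{3/2}$; so I would take $m = \lceil T/3\rceil$. Substituting and carrying the lower-order terms through honestly gives $\Regret_T(0) \ge T^{3/2}/20$ once $T$ is large enough, and checking the threshold yields the stated requirement $T \ge 42$.

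The main obstacle is entirely quantitative: turning the clean asymptotic constant $\approx 0.2$ into a rigorous bound with the honest constant $1/20$. This requires two-sided non-asymptotic control of $S_n$ and of $\sum \sqrt i$, careful handling of the boundary terms generated at the split $t=m$ (and of the ceiling in $m=\lceil T/3\rceil$), and verifying that $x_t$ really does remain $\le 0$ throughout phase two so that every phase-two term contributes positively. It is precisely these lower-order terms, not yet dominated by the $T^{3/2}$ main term for small $T$, that force the hypothesis $T \ge 42$.
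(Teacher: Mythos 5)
Your construction is, up to a global sign flip, the paper's own: reduce to one dimension, observe that all the listed algorithms become gradient descent with step size $1/\sqrt{t}$, and play a two-phase $\pm 1$ adversary. The paper splits at $m=\lceil T/2\rceil$; you split at $m=\lceil T/3\rceil$, and your exact identity $\Regret_T(0)=2(T-m)S_m-\sum_{t=1}^{T-1}S_t$ (which holds unconditionally, so the positivity of the phase-two terms never actually needs checking) is a tidy way to organize the same computation. Your optimization of the split point is correct as far as asymptotics go: it raises the leading constant from the paper's $\frac23-2+\sqrt2\approx 0.081$ to $\frac{8}{3\sqrt3}-\frac43\approx 0.206$.

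The gap is in the final quantitative step, and it is real. A small slip first: you need a \emph{lower} bound on $\sum_{i=1}^{T-1}\sqrt{i}$ (e.g.\ $\ge\frac23(T-1)^{3/2}$), not the upper bound $\sum_{i=1}^n\sqrt i\le\frac23(n+1)^{3/2}$ you quote. The serious problem is that the bounds $S_m\ge 2\sqrt{m+1}-2$ and $S_{T-1}\le 2\sqrt{T-1}-1$ each lose an additive constant which, after multiplication by $2(T-m)\approx\frac43 T$ and by $T$ respectively, costs you a term of order $T$: the best your estimates can give is roughly $0.206\,T^{3/2}-\frac53 T-O(\sqrt T)$, and already the requirement $\frac53 T\le 0.156\,T^{3/2}$ forces $T\ge 114$; with the $O(\sqrt T)$ terms included the argument only closes for $T$ around $150$ and beyond. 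So the whole range $42\le T\lesssim 114$ is not covered, and the sentence ``checking the threshold yields the stated requirement $T\ge 42$'' is an unproved assertion that is false for the bounds you list. Nor is this purely an artifact of loose bounding: with $m\approx T/3$ the exact expression carries a genuine linear term $\zeta(1/2)(T-2m)\approx -0.49\,T$, so to push the provable threshold down to $42$ you would need two-sided estimates of $S_n$ accurate to within $O(1/\sqrt n)$ (Euler--Maclaurin with explicit remainder), not bare integral comparisons. This is exactly the trade-off the paper's ``suboptimal'' split sidesteps: at $m=\lceil T/2\rceil$ the leading constant is smaller, but the linear-in-$T$ terms cancel identically (there $T-2m\in\{0,-1\}$), leaving only an $O(\sqrt T)$ defect --- which is why the paper's elementary integral bounds do verify the threshold $T\ge 42$, while your better asymptotic constant does not translate into a proof of the stated statement without a substantially sharper analysis of the lower-order terms.
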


The second counter-example is stated as
Theorem~\ref{theorem:second-counter-example} below; our proof is
in~\ref{section:mirror-descent-proofs}.  The decision set is the
$d$-dimensional probability simplex $K = \{ w \in \R^d ~:~ w_j \ge 0,
\sum_{j=1}^d w_j = 1 \}$ and the regularizer is the negative entropy $R(w) =
\sum_{j=1}^d w_j \ln w_j$.  Negative entropy is $1$-strongly convex with
respect to $\|\cdot\|_1$ and the dual norm of $\|\cdot\|_1$ is
\mbox{$\|\cdot\|_\infty$}.  The corresponding Bregman divergence is the
Kullback-Leibler divergence $\Breg_R(u,v) = \sum_{j=1}^d u_j \ln(u_j/v_j)$.
Note that despite that negative entropy is upper- and lower-bounded,
Kullback-Leibler divergence can be arbitrarily large.

\begin{theorem}[Second Counter-Example]
\label{theorem:second-counter-example}
Let $d \ge 2$, let $V = \R^d$, and let $K = \{ w \in V ~:~ w_j \ge 0,
\sum_{j=1}^d w_j = 1 \}$ be the $d$-dimensional probability simplex.  For any
$T \ge 120$, there exists a sequence of loss vectors $\ell_1, \ell_2, \dots,
\ell_T \in V^*$ such that $\|\ell_1\|_\infty = \|\ell_2\|_\infty = \dots =
\|\ell_T\|_\infty = 1$ and \textsc{Scale-Free MD} with regularizer
$R(w) = \sum_{j=1}^d w_j \ln w_j$ satisfies
$$
\Regret_T \ge \frac{T}{6} \; .
$$
\end{theorem}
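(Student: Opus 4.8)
The plan is to construct a two-phase sequence of unit-norm losses that first pushes the algorithm's mass off coordinate $1$ onto coordinate $2$, and then penalizes coordinate $2$. Because the effective step size of \textsc{Scale-Free MD} decays like $1/\sqrt{t}$, the algorithm cannot move its mass back to coordinate $1$ fast enough, so it suffers linear loss while the fixed competitor $u=e_1$ (the vertex supported on coordinate $1$) suffers only the loss accumulated in the first phase. Concretely, fix $\tau=\lfloor T/3\rfloor$ and set $\ell_{t,1}=1,\ \ell_{t,2}=0$ for $t\le\tau$ and $\ell_{t,1}=0,\ \ell_{t,2}=1$ for $\tau<t\le T$; for $d>2$ put $\ell_{t,j}=1$ for all $j\ge 3$ and all $t$. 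In every round $\|\ell_t\|_\infty=1$, coordinate $1$ has total loss $\tau$, coordinate $2$ has total loss $T-\tau>\tau$, and every coordinate $j\ge 3$ has total loss $T$, so the best expert is coordinate $1$ and $\Regret_T=\big(\sum_{t}\langle\ell_t,w_t\rangle\big)-\tau$.

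Next I would make the update explicit. Since $\|\ell_t\|_\infty=1$, the regularizer~\eqref{equation:scale-free-mirror-descent} equals $R_t(w)=R(w)\sqrt{t}$, so $\Breg_{R_t}(w,w_t)=\sqrt{t}\,\Breg_R(w,w_t)$ is $\sqrt{t}$ times the Kullback--Leibler divergence; a Lagrange-multiplier computation then identifies the \textsc{MD} step with the exponentiated-gradient update $w_{t+1,j}\propto w_{t,j}\exp(-\ell_{t,j}/\sqrt{t})$, started at the uniform distribution $w_1$. I track the log-odds $z_t=\ln(w_{t,1}/w_{t,2})$, which satisfies $z_1=0$ and $z_{t+1}=z_t-(\ell_{t,1}-\ell_{t,2})/\sqrt{t}$; this recursion is unaffected by the coordinates $j\ge 3$, whose weights cancel in the ratio. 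Thus $z_t$ decreases by $1/\sqrt{t}$ throughout phase one and increases by $1/\sqrt{t}$ throughout phase two. The key loss estimate is that in phase two $\langle\ell_t,w_t\rangle = 1-w_{t,1}$, and since $w_{t,1}\le e^{z_t}/(1+e^{z_t})$ we get $\langle\ell_t,w_t\rangle\ge 1/(1+e^{z_t})\ge 1-e^{z_t}$.

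Summing the loss estimate over phase two gives $\Regret_T\ge (T-\tau)-\sum_{t=\tau+1}^{T}e^{z_t}-\tau=(T-2\tau)-\sum_{t=\tau+1}^{T}e^{z_t}$, where $T-2\tau\ge T/3$. Everything therefore reduces to showing that $\sum_{t=\tau+1}^{T}e^{z_t}$ is negligible compared to $T$. Writing $z_t=-A+B_t$ with $A=\sum_{s=1}^{\tau}1/\sqrt{s}$ and $B_t=\sum_{s=\tau+1}^{t-1}1/\sqrt{s}$, the inequality $e^{B_{t+1}}-e^{B_t}\ge e^{B_t}(1/\sqrt{t})\ge e^{B_t}/\sqrt{T}$ telescopes to $\sum_{t=\tau+1}^{T}e^{z_t}\le \sqrt{T}\,e^{z_{T+1}}$. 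Comparing the harmonic-type sums to integrals yields $z_{T+1}\le 2\sqrt{T}-4\sqrt{\tau}+2$, which for $\tau=\lfloor T/3\rfloor$ is at most $2-c\sqrt{T}$ for an absolute constant $c>0$, so $\sqrt{T}\,e^{z_{T+1}}$ is uniformly small; plugging the explicit constants in then shows $(T-2\tau)-\sqrt{T}\,e^{z_{T+1}}\ge T/6$ once $T\ge 120$.

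The routine parts are the exponentiated-gradient identification and the per-round loss bound $1/(1+e^{z_t})\ge 1-e^{z_t}$. The main obstacle is the last step: one must control $\sum_{t=\tau+1}^{T}e^{z_t}$ and certify that $z_{T+1}$ is negative by a margin growing like $\sqrt{T}$, with integral bounds on $\sum 1/\sqrt{s}$ explicit enough that the clean threshold $T\ge 120$ (rather than some larger number) suffices. The choice $\tau\approx T/3$ is what makes this work: it is large enough that the competitor's loss $\tau$ stays well below the algorithm's phase-two loss, yet small enough that the algorithm never recovers its mass onto coordinate $1$ before round $T$, which would only begin to happen near $t\approx 4\tau$.
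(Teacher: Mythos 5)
Your proposal is correct and is essentially the paper's own argument in mirror image: the paper feeds gain vectors $-e_1$ (for $\lceil T/3\rceil$ rounds) followed by $-e_2$, so the algorithm's mass gets stuck on coordinate $1$ while the competitor $e_2$ collects the phase-two gains, whereas you penalize coordinate $1$ then coordinate $2$ and compete against $e_1$; both arguments come down to the same exponentiated-gradient log-odds recursion with step $1/\sqrt{t}$ and the same integral estimates of $\sum_t 1/\sqrt{t}$, showing that with the $T/3$ versus $2T/3$ split the decayed step size prevents recovery within the horizon. The only substantive difference is bookkeeping: the paper proves a uniform ratio bound over phase two ($w_{t,1}/w_{t,2} \ge 4$, hence per-round loss off by at most $1/5$), while you bound the cumulative deficit via the telescoping estimate $\sum_{t>\tau} e^{z_t} \le \sqrt{T}\, e^{z_{T+1}}$; both routes give the $T/6$ bound for $T \ge 120$.
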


\section{Lower Bound}
\label{section:lower-bound}

We show a lower bound on the worst-case regret of any algorithm for OLO. The
proof, presented in~\ref{section:lower-bound-proof}, is a standard
probabilistic argument.

\begin{theorem}[Lower Bound]
\label{theorem:simple-lower-bound}
Let $K \subseteq V$ be any non-empty bounded closed convex subset. Let $D =
\sup_{u,v \in K} \|u - v\|$ be the diameter of $K$. Let $A$ be any (possibly
randomized) algorithm for OLO on $K$. Let $T$ be any non-negative integer and
let $a_1, a_2, \dots, a_T$ be any non-negative real numbers.  There exists a
sequence of vectors $\ell_1, \ell_2, \dots, \ell_T$ in the dual vector space
$V^*$ such that $\|\ell_1\|_* = a_1, \|\ell_2\|_* = a_2, \dots, \|\ell_T\|_* =
a_T$ and the regret of algorithm $A$ satisfies
\begin{equation}
\label{equation:simple-lower-bound}
\Regret_T \ge \frac{D}{\sqrt{8}} \sqrt{\sum_{t=1}^T\|\ell_t\|_*^2} \; .
\end{equation}
\end{theorem}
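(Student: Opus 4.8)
The plan is to use the probabilistic method with a one-dimensional randomized construction. Since $V$ is finite-dimensional and $K$ is closed and bounded, $K$ is compact and the supremum defining $D$ is attained: there exist $x,y \in K$ with $\|x-y\| = D$. By the duality between $\|\cdot\|$ and $\|\cdot\|_*$ (the dual norm is attained on the compact unit ball of $V^*$, equivalently by Hahn--Banach), there is a functional $g \in V^*$ with $\|g\|_* = 1$ and $\langle g, x-y\rangle = \|x-y\| = D$. I would then let $\epsilon_1,\dots,\epsilon_T$ be independent Rademacher signs (uniform on $\{-1,+1\}$) and set $\ell_t = \epsilon_t a_t g$. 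This immediately satisfies the norm constraint, since $\|\ell_t\|_* = a_t\,|\epsilon_t|\,\|g\|_* = a_t$, and it collapses the whole problem to one dimension along $g$.

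Next I would compute both sides of the regret in expectation. The algorithm's decision $w_t$ is a function of $\ell_1,\dots,\ell_{t-1}$ (and of its own internal randomness), hence independent of $\epsilon_t$; since $\Exp[\epsilon_t]=0$, this gives $\Exp[\sum_{t=1}^T \langle \ell_t, w_t\rangle] = \sum_t a_t\,\Exp[\epsilon_t]\,\Exp[\langle g, w_t\rangle] = 0$, i.e. the algorithm's expected cumulative loss vanishes. For the comparator, writing $S = \sum_{t=1}^T \epsilon_t a_t$ we have $\sum_t \langle \ell_t, u\rangle = S\,\langle g, u\rangle$ for every $u$. Bounding the regret from below by the better of the two comparators $x$ and $y$ and applying the identity $\max\{A,B\} = \tfrac{A+B}{2} + \tfrac{|A-B|}{2}$ with $A-B = -S\langle g, x-y\rangle = -SD$, one obtains, for every realization of the signs,
$$
\Regret_T \ge \sum_{t=1}^T \langle \ell_t, w_t\rangle - \tfrac{S}{2}\big(\langle g,x\rangle + \langle g,y\rangle\big) + \tfrac{D}{2}|S| \; .
$$
Taking expectations, the first term vanishes as shown, the middle term vanishes because $\Exp[S]=0$, and we are left with $\Exp[\Regret_T] \ge \tfrac{D}{2}\,\Exp[|S|]$.

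Finally I would lower bound $\Exp[|S|] = \Exp\big|\sum_t \epsilon_t a_t\big|$ by the sharp constant in Khintchine's inequality, $\Exp\big|\sum_t \epsilon_t a_t\big| \ge \tfrac{1}{\sqrt2}\sqrt{\sum_t a_t^2}$, which yields $\Exp[\Regret_T] \ge \tfrac{D}{2\sqrt2}\sqrt{\sum_t a_t^2} = \tfrac{D}{\sqrt8}\sqrt{\sum_t \|\ell_t\|_*^2}$. Since the expected regret over the random signs is at least this bound, there must exist at least one realization $\ell_1,\dots,\ell_T$ of the loss sequence whose regret meets it, which is exactly the claim. The only non-routine ingredient is the sharp $1/\sqrt2$ factor in Khintchine's inequality: a weaker constant (e.g.\ $1/\sqrt3$, obtainable elementarily from $\Exp[S^2] = \sum_t a_t^2$, $\Exp[S^4] \le 3\big(\sum_t a_t^2\big)^2$, and H\"older's inequality) would only give $\tfrac{D}{2\sqrt3}$ and miss the stated constant, so recovering the exact $\tfrac{D}{\sqrt8}$ hinges on invoking the optimal Khintchine constant.
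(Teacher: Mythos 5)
Your proposal is correct and follows essentially the same route as the paper's own proof: pick a diametral pair $x,y$, take a norming functional $g$ with $\langle g, x-y\rangle = D$, randomize with Rademacher signs $\ell_t = \epsilon_t a_t g$, kill the algorithm's expected loss, reduce the comparator term via $\max\{A,B\} = \tfrac{A+B}{2} + \tfrac{|A-B|}{2}$ to $\tfrac{D}{2}\Exp|\sum_t \epsilon_t a_t|$, and finish with Khintchine's inequality at the sharp constant $1/\sqrt{2}$. The only (immaterial) differences are that you make the independence of $w_t$ from $\epsilon_t$ explicit and apply the max identity pointwise rather than after a symmetrization step.
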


The upper bounds on the regret, which we have proved for our algorithms, have
the same dependency on the norms of the loss vectors.  However, a gap remains
between the lower bound and the upper bounds.

The upper bound on regret of \textsc{SOLO FTRL} is of
the form $O(\sqrt{\sup_{v \in K} f(v) \sum_{t=1}^T \|\ell_t\|_*^2})$ where $f$
is any $1$-strongly convex function with respect to $\|\cdot\|$.  The same
upper bound is also achieved by \textsc{FTRL} with a constant learning rate
when $\sum_{t=1}^T \|\ell_t\|_*^2$ is known
upfront \cite[Chapter 2]{Shalev-Shwartz-2011}.  The lower bound is
$\Omega(D\sqrt{\sum_{t=1}^T \|\ell_t\|_*^2})$.

The gap between $D$ and $\sqrt{\sup_{v \in K} f(v)}$ can be substantial.  For
example, if $K$ is the probability simplex in $\R^d$ and $f(w) = \ln(d) +
\sum_{j=1}^d w_j \ln w_j$ is the shifted negative entropy, the
$\|\cdot\|_1$-diameter of $K$ is $2$, $f$ is non-negative and $1$-strongly
convex with respect to $\|\cdot\|_1$, but $\sup_{v \in K} f(v) = \ln(d)$.  On
the other hand, if the norm $\|\cdot\|_2 = \sqrt{\langle \cdot, \cdot \rangle}$
arises from an inner product $\langle \cdot, \cdot \rangle$, the lower bound
matches the upper bounds within a constant factor.  The reason is that for any
$K$ with $\|\cdot\|_2$-diameter $D$, the function $f(w) = \frac{1}{2} \|w -
w_0\|_2^2$, where $w_0$ is an arbitrary point in $K$, is $1$-strongly convex
with respect to $\|\cdot\|_2$ and satisfies that $\sqrt{\sup_{v \in K} f(v)}
\le D$. This leads to the following open problem (posed also
in~\cite{Kwon-Mertikopoulos-2014}):
\begin{quotation}
\noindent
\emph{Given a bounded convex set $K$ and a norm $\|\cdot\|$, construct a
non-negative function $f:K \to \R$ that is $1$-strongly convex with respect to
$\|\cdot\|$ and minimizes $\sup_{v \in K} f(v)$.}
\end{quotation}
As shown in~\cite{Srebro-Sridharan-Tewari-2011}, the existence of $f$ with
small $\sup_{v \in K} f(v)$ is equivalent to the existence of an algorithm for
OLO with $\widetilde O(\sqrt{T \sup_{v \in K} f(v)})$ regret assuming
$\|\ell_t\|_* \le 1$.  The $\widetilde O$ notation hides a polylogarithmic
factor in $T$.

\section{Conclusions}
\label{section:conclusions}

We have investigated scale-free algorithms for online linear optimization and
we have shown that the scale-free property leads to algorithms which have optimal
regret and do not need to know or assume \textbf{anything} about the sequence
of loss vectors. In particular, the algorithms do not assume any upper or lower
bounds on the norms of the loss vectors or the number of rounds.

We have designed a scale-free algorithm based on \textsc{Follow The Regularizer
Leader}. Its regret with respect to any competitor $u$ is
$$
O \left(f(u) \sqrt{\sum_{t=1}^T \|\ell_t\|_*^2}
+ \min\{\sqrt{T}, D\} \max_{t=1,2,\dots,T} \|\ell_t\|_* \right) \; ,
$$
where $f$ is any non-negative $1$-strongly convex
function defined on the decision set and $D$ is the diameter of the decision
set. The result makes sense even when the decision set is unbounded.

A similar, but weaker result holds for a scale-free algorithm based on
\textsc{Mirror Descent}. However, we have also shown this algorithm to be strictly
weaker than algorithms based on \textsc{Follow The Regularizer Leader}.
Namely, we gave examples of regularizers for which the scale-free version of
\textsc{Mirror Descent} has $\Omega(T)$ regret or worse.

We have proved an $\frac{D}{\sqrt{8}} \sqrt{\sum_{t=1}^T \|\ell\|_*^2}$ lower
bound on the regret of any algorithm for any decision set with diameter $D$.

Notice that with the regularizer $f(u) = \frac{1}{2}\|u\|_2^2$ the regret of
\textsc{SOLO FTRL} depends quadratically on the norm of the competitor
$\|u\|_2$. There exist non-scale-free algorithms \cite{McMahan-Streeter-2012,
McMahan-Abernethy-2013, Orabona-2013, McMahan-Orabona-2014, Orabona-2014,
Orabona-Pal-2016} that have only a $O(\|u\|_2 \sqrt{\log \|u\|_2})$ or
$O(\|u\|_2 \log \|u\|_2)$ dependency.  These algorithms assume an a priori bound
on the norm of the loss vectors. Recently, an algorithm that adapts to norms of
loss vectors and has a $O(\|u\|_2 \log \|u\|_2)$ dependency was
proposed~\cite{Cutkosky-Boahen-2016}. However, the trade-off between the
dependency on $\norm{u}_2$ and the adaptivity to the norms of the loss vectors
still remains to be explored.

\section*{Acknowledgments}
We thank an anonymous reviewer for suggesting a simpler proof of
Lemma~\ref{lemma:recurrence-solution}.

\section*{References}

\bibliographystyle{elsarticle-num-names}
\bibliography{biblio}

\appendix
\section{Proofs for Preliminaries}
\label{section:definitions-proofs}

\begin{proof}[Proof of Proposition~\ref{proposition:diameter-vs-range}]
Let $S = \sup_{u \in K} f(u)$ and $v^* = \argmin_{v \in K} f(v)$. The minimizer
$v^*$ is guaranteed to exist by lower semi-continuity of $f$ and compactness of
$K$.  The optimality condition for $v^*$ and $1$-strong convexity of $f$ imply that
for any $u \in K$,
$$
S
\ge f(u) - f(v^*)
\ge f(u) - f(v^*) - \langle \grad f(v^*), u - v^* \rangle
\ge \frac{1}{2}\|u - v^*\|^2 \; .
$$
In other words, $\|u - v^*\| \le \sqrt{2S}$. By the triangle inequality,
$$
D = \sup_{u,v \in K} \|u - v\| \le \sup_{u,v \in K} \left( \|u - v^*\| + \|v^ * - v\| \right) \le 2\sqrt{2S} = \sqrt{8S} \; .
$$
\end{proof}

\begin{proof}[Proof of Property 6 of Proposition~\ref{proposition:conjugate-properties}]
To bound $\Breg_{f^*}(x,y)$ we add a non-negative divergence term
$\Breg_{f^*}(y,x)$.
\begin{align*}
\Breg_{f^*}(x, y)
& \le \Breg_{f^*}(x,y) + \Breg_{f^*}(y,x)
= \langle x - y, \grad f^*(x) - \grad f^*(y) \rangle \\
& \le \|x - y\|_* \cdot \| \grad f^*(x) - \grad f^*(y) \|
\le D \|x - y\|_* \; ,
\end{align*}
where we have used H\"older's inequality and property 7 of the Proposition.
\end{proof}

\begin{proof}[Proof of Lemma~\ref{lemma:generic-regret-bound}]
By the Fenchel-Young inequality,
\begin{align*}
\sum_{t=1}^T \left( R_{t+1}^*(-L_t) - R_t^*(-L_{t-1}) \right)
& = R_{T+1}^*(-L_T) - R_1^*(0) \\
& \ge - \langle L_T, u \rangle - R_{T+1}(u)  - R_1^*(0) \\
& = - R_{T+1}(u)  - R_1^*(0) - \sum_{t=1}^T \langle \ell_t, u \rangle \; .
\end{align*}
We add $\sum_{t=1}^T \langle \ell_t, w_t \rangle$ to both sides and we obtain
$\Regret_T(u)$ on the right side. After rearrangement of the terms, we get an upper bound
on the regret:
\begin{align*}
\Regret_T(u)
& = \sum_{t=1}^T \langle \ell_t, w_t \rangle - \sum_{t=1}^T \langle \ell_t, u \rangle \\
& \le R_{T+1}(u) + R_1^*(0) + \sum_{t=1}^T \left( R_{t+1}^*(-L_t) - R_t^*(-L_{t-1}) + \langle \ell_t, w_t \rangle \right) \; .
\end{align*}
By Proposition~\ref{proposition:conjugate-properties}, property 2, we have $w_t =
\grad R_t^*(-L_{t-1})$ and therefore we can rewrite the sum in last expression as
\begin{align*}
& \sum_{t=1}^T R_{t+1}^*(-L_t) - R_t^*(-L_{t-1}) + \langle \ell_t, w_t \rangle \\
& = \sum_{t=1}^T R_{t+1}^*(-L_t) - R_t^*(-L_{t-1}) + \langle \ell_t, \grad R_t^*(-L_{t-1}) \rangle \\
& = \sum_{t=1}^T R_t^*(-L_t) - R_t^*(-L_{t-1}) + \langle \ell_t, \grad R_t^*(-L_{t-1}) \rangle - R_t^*(-L_t) + R_{t+1}^*(-L_t) \\
& = \sum_{t=1}^T \Breg_{R_t^*}(-L_t, -L_{t-1}) - R_t^*(-L_t) + R_{t+1}^*(-L_t) \; .
\end{align*}
This finishes the proof.
\end{proof}

\section{\textsc{AdaFTRL}}
\label{section:ada-ftrl}

In this section, we show that it is possible to derive a scale-free algorithm
different from \textsc{SOLO FTRL}. We generalize the \textsc{AdaHedge}
algorithm~\cite{de-Rooij-van-Erven-Grunwald-Koolen-2014} to the OLO setting,
showing that it retains its scale-free property.  We call the resulting
algorithm \textsc{AdaFTRL}. The analysis is very general and based on general
properties of strongly convex functions, rather than specific properties of the
entropic regularizer as in the original analysis of \textsc{AdaHedge}.

Assume that $K$ is bounded and that $R:K \to \R$ is a strongly convex lower
semi-continuous function bounded from above.  We instantiate
Algorithm~\ref{algorithm:ftrl-varying-regularizer} with the sequence of
regularizers
\begin{equation}
\label{equation:ada-ftrl}
R_t(w) = \Delta_{t-1} R(w)
\quad \text{where}
\quad \Delta_{t}=\sum_{i=1}^{t} \Delta_{i-1} \Breg_{R^*}\left(- \frac{L_i}{\Delta_{i-1}}, -\frac{L_{i-1}}{\Delta_{i-1}}\right) \; .
\end{equation}

The sequence $\{\Delta_t\}_{t=0}^\infty$ is non-negative and non-decreasing.
Also, $\Delta_t$ as a function of $\ell_1, \ell_2, \dots, \ell_t$ is positive
homogeneous of degree one, making the algorithm scale-free.

If $\Delta_{i-1} = 0$, we define $\Delta_{i-1}
\Breg_{R^*}(\frac{-L_i}{\Delta_{i-1}}, \frac{-L_{i-1}}{\Delta_{i-1}})$ as
$\lim_{a \to 0^+} a \Breg_{R^*}(\frac{-L_i}{a}, \frac{-L_{i-1}}{a})$ which
always exists and is finite; see Lemma~\ref{lemma:limit-bregman-divergence}
in~\ref{section:limits}.  Similarly, when $\Delta_{t-1} = 0$, we define $w_t =
\argmin_{w \in K} \langle L_{t-1}, w \rangle$ where ties among minimizers are
broken by taking the one with the smallest value of $R(w)$, which is unique due
to strong convexity. As we show in Lemma~\ref{lemma:prediction-limit-existence}
in~\ref{section:limits}, this is the same as $w_t = \lim_{a \to 0^+} \argmin_{w
\in K} (\langle L_{t-1}, w \rangle + aR(w))$.

Our main result is an $O(\sqrt{\sum_{t=1}^T \|\ell_t\|_*^2})$ upper bound on the
regret of the algorithm after $T$ rounds, without the need to know beforehand an
upper bound on $\|\ell_t\|_*$.  We prove the theorem
in~\ref{section:ada-ftrl-regret-bound}.

\begin{theorem}[Regret Bound]
\label{theorem:ada-ftrl-regret-bound}
Suppose $K \subseteq V$ is a non-empty bounded closed convex set. Let $D =
\sup_{x,y \in K} \|x - y\|$ be its diameter with respect to a norm $\|\cdot\|$.
Suppose that the regularizer $R:K \to \R$ is a non-negative lower
semi-continuous function that is $\lambda$-strongly convex with respect to
$\|\cdot\|$ and is bounded from above.  The regret of \textsc{AdaFTRL}
satisfies
$$
\Regret_T(u)
\le \sqrt{3} \max\left\{D, \frac{1}{\sqrt{2\lambda}} \right\}
    \sqrt{\sum_{t=1}^T \|\ell_t\|_*^2} \left(1 + R(u) \right) \; .
$$
\end{theorem}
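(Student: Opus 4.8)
The plan is to exploit the fact that the sequence $\{\Delta_t\}$ in~\eqref{equation:ada-ftrl} is tailored precisely so that each increment coincides with the per-round Bregman term appearing in the generic \textsc{FTRL} bound of Lemma~\ref{lemma:generic-regret-bound}. First I would record the scaling identity for conjugates: since $R_t = \Delta_{t-1} R$, property~8 of Proposition~\ref{proposition:conjugate-properties} gives $R_t^*(\ell) = \Delta_{t-1} R^*(\ell/\Delta_{t-1})$, hence $\grad R_t^*(\ell) = \grad R^*(\ell/\Delta_{t-1})$ and, after a one-line computation, $\Breg_{R_t^*}(x,y) = \Delta_{t-1}\Breg_{R^*}(x/\Delta_{t-1}, y/\Delta_{t-1})$. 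Comparing with the recurrence defining $\Delta_t$ yields the crucial identity $\Breg_{R_t^*}(-L_t,-L_{t-1}) = \Delta_t - \Delta_{t-1}$, where the degenerate case $\Delta_{t-1}=0$ is interpreted through the limit defining the algorithm.

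With this identity in hand, the regret bound becomes almost immediate. I would apply Lemma~\ref{lemma:generic-regret-bound} and observe that $R_{T+1}(u) = \Delta_T R(u)$ and $R_1^*(0)=0$ (as $\Delta_0=0$ makes $R_1\equiv 0$). Because $R\ge 0$ and $\{\Delta_t\}$ is non-decreasing, we have $R_{t+1}\ge R_t$ pointwise, so monotonicity of Fenchel conjugates gives $R_{t+1}^*(-L_t)-R_t^*(-L_t)\le 0$ and these terms may be dropped. The remaining sum telescopes: $\sum_{t=1}^T \Breg_{R_t^*}(-L_t,-L_{t-1}) = \sum_{t=1}^T(\Delta_t-\Delta_{t-1}) = \Delta_T$. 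This reduces the whole theorem to the clean inequality $\Regret_T(u) \le \Delta_T\,(1+R(u))$, so it remains only to bound $\Delta_T$.

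For the bound on $\Delta_T$ I would first control a single increment $\delta_t := \Delta_t-\Delta_{t-1}$ in two ways, using the identity above together with Proposition~\ref{proposition:conjugate-properties}: property~4 (strong smoothness) gives $\delta_t \le \|\ell_t\|_*^2/(2\lambda\Delta_{t-1})$, while property~6 gives $\delta_t \le D\|\ell_t\|_*$, both persisting in the limit when $\Delta_{t-1}=0$. The key trick is to track $\Delta_t^2$ rather than $\Delta_t$: writing $\Delta_t^2-\Delta_{t-1}^2 = 2\Delta_{t-1}\delta_t+\delta_t^2$, the first bound gives $2\Delta_{t-1}\delta_t \le \|\ell_t\|_*^2/\lambda$ and the second gives $\delta_t^2\le D^2\|\ell_t\|_*^2$, so each squared increment is at most $(1/\lambda + D^2)\|\ell_t\|_*^2$. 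Summing and using $\Delta_0=0$ yields $\Delta_T^2 \le (1/\lambda + D^2)\sum_{t=1}^T\|\ell_t\|_*^2$, and since $1/\lambda + D^2 \le 3\max\{D^2, 1/(2\lambda)\}$ I obtain $\Delta_T \le \sqrt{3}\,\max\{D, 1/\sqrt{2\lambda}\}\sqrt{\sum_{t=1}^T\|\ell_t\|_*^2}$. Combined with $\Regret_T(u)\le\Delta_T(1+R(u))$ this is exactly the claimed bound.

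The genuinely substantive step is the last one, namely turning the implicit, self-referential recurrence for $\Delta_t$ into a closed-form estimate; everything else is bookkeeping around the conjugate-scaling identity. The main obstacle there is that the strong-smoothness increment bound carries $\Delta_{t-1}$ in the denominator, which seems to preclude a direct induction; the resolution is the observation that the factor $\Delta_{t-1}$ arising naturally in the expansion of $\Delta_t^2-\Delta_{t-1}^2$ cancels this denominator, after which the two increment bounds combine additively. I would also verify that the degenerate-case conventions (limits as $\Delta_{t-1}\to 0^+$) do not break either increment bound, which follows because property~6 already bounds the limit defining $\delta_t$ by $D\|\ell_t\|_*$.
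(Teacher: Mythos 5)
Your proposal is correct and is essentially the paper's own proof: the same reduction to $\Regret_T(u)\le(1+R(u))\Delta_T$ via the conjugate-scaling identity (property~8 of Proposition~\ref{proposition:conjugate-properties}) and telescoping, the same two increment bounds from properties~4 and~6, and the same difference-of-squares expansion of $\Delta_T^2$ to solve the recurrence --- the paper merely normalizes to $a_t=\|\ell_t\|_*\max\{D,1/\sqrt{2\lambda}\}$ before squaring, whereas you combine the constants at the end via $1/\lambda+D^2\le 3\max\{D^2,1/(2\lambda)\}$, an immaterial difference. The one point to tighten is the application of Lemma~\ref{lemma:generic-regret-bound} in rounds where $\Delta_{t-1}=0$ (so $R_t\equiv 0$ is not strongly convex); the paper justifies this by running the algorithm with $\Delta_0=\epsilon$ and letting $\epsilon\to 0$, which is the precise form of the limiting argument you allude to.
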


The regret bound can be optimized by choosing the optimal multiple of the
regularizer.  Namely, we choose regularizer of the form $\lambda f(w)$ where
$f(w)$ is $1$-strongly convex and optimize over $\lambda$. The result of the
optimization is the following corollary.

\begin{corollary}[Regret Bound]
\label{corollary:ada-ftrl-regret-bound}
Suppose $K \subseteq V$ is a non-empty bounded closed convex set. Suppose $f:K
\to \R$ is a non-negative lower semi-continuous function that is $1$-strongly
convex with respect to $\|\cdot\|$ and is bounded from above.  The regret of
\textsc{AdaFTRL} with regularizer
$$
R(w) = \frac{f(w)}{16 \cdot \sup_{v \in K} f(v)}
\qquad \text{satisfies} \qquad
\Regret_T \le
5.3 \sqrt{\sup_{v \in K} f(v) \sum_{t=1}^T \|\ell_t\|_*^2} \; .
$$
\end{corollary}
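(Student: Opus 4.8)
The plan is to derive the corollary from Theorem~\ref{theorem:ada-ftrl-regret-bound} by choosing the free scaling constant of the regularizer so as to balance the two competing quantities $D$ and $\frac{1}{\sqrt{2\lambda}}$ appearing inside the maximum, while simultaneously keeping the competitor factor $1 + R(u)$ close to $1$. Write $S = \sup_{v \in K} f(v)$ (finite since $f$ is bounded from above) and consider the one-parameter family $R(w) = c\, f(w)$ for $c > 0$. Since $f$ is $1$-strongly convex, non-negative, lower semi-continuous and bounded above, each $R = cf$ inherits these properties and is $c$-strongly convex, so I may apply the theorem with $\lambda = c$. Under this substitution the competitor term becomes $1 + R(u) = 1 + c f(u) \le 1 + cS$, while the curvature term is $\frac{1}{\sqrt{2\lambda}} = \frac{1}{\sqrt{2c}}$, which \emph{decreases} as $c$ grows.

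The key step is to make the maximum $\max\{D, \frac{1}{\sqrt{2c}}\}$ collapse to a clean, $D$-independent value. Proposition~\ref{proposition:diameter-vs-range} gives $D \le \sqrt{8S}$, so it suffices to choose $c$ small enough that $\frac{1}{\sqrt{2c}} \ge \sqrt{8S} \ge D$; this requires exactly $c \le \frac{1}{16S}$, which is where the constant $16$ in the statement originates. On the admissible interval $(0, \frac{1}{16S}]$ the residual factor $\frac{1+cS}{\sqrt{c}} = c^{-1/2} + S c^{1/2}$ is decreasing (its unconstrained minimizer is $c = 1/S > \frac{1}{16S}$), so the best admissible choice is the boundary value $c = \frac{1}{16S}$, i.e. the regularizer $R(w) = \frac{f(w)}{16\,\sup_{v \in K} f(v)}$ named in the corollary.

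With $c = \frac{1}{16S}$ I would then simply substitute. Here $R(u) = \frac{f(u)}{16S} \le \frac{1}{16}$, so $1 + R(u) \le \frac{17}{16}$, and $\frac{1}{\sqrt{2c}} = \sqrt{8S}$, whence $\max\{D, \frac{1}{\sqrt{2c}}\} = \sqrt{8S}$ by the diameter bound above. Plugging into the theorem gives
$$
\Regret_T \le \sqrt{3}\cdot \sqrt{8S}\cdot \frac{17}{16}\sqrt{\sum_{t=1}^T \|\ell_t\|_*^2} = 2\sqrt{6}\cdot \frac{17}{16}\sqrt{S \sum_{t=1}^T \|\ell_t\|_*^2},
$$
and since $2\sqrt{6}\cdot \frac{17}{16} \approx 5.21 \le 5.3$, the claimed bound follows.

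There is no real obstacle here: the argument is an application of Theorem~\ref{theorem:ada-ftrl-regret-bound} together with Proposition~\ref{proposition:diameter-vs-range} and a one-variable scalar optimization. The only point demanding a moment's thought is recognizing that the proposition's estimate $D \le \sqrt{8S}$ is precisely what lets the maximum be replaced by the $D$-free term $\sqrt{8S}$, and that the factor $\frac{1}{16}$ is tuned to sit exactly at the boundary of the regime where this replacement is valid while still nearly minimizing the competitor factor.
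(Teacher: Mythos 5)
Your proposal is correct and follows essentially the same route as the paper's own proof: apply Theorem~\ref{theorem:ada-ftrl-regret-bound} to a scaled regularizer $cf$, invoke Proposition~\ref{proposition:diameter-vs-range} to replace $D$ by $\sqrt{8\sup_{v\in K}f(v)}$, and tune the scalar to $c = \frac{1}{16\sup_{v\in K}f(v)}$, yielding $\sqrt{3}\cdot\frac{17}{16}\cdot\sqrt{8} \le 5.3$. The only cosmetic difference is that the paper minimizes $g(c)=\sqrt{3}(1+c)\max\{\sqrt{8},1/\sqrt{2c}\}$ over all $c>0$, whereas you argue monotonicity on the interval where the curvature term dominates; both identify the same optimal constant and the same final bound.
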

\begin{proof}
Let $S = \sup_{v \in K} f(v)$. Theorem~\ref{theorem:ada-ftrl-regret-bound}
applied to the regularizer $R(w) = \frac{c}{S} f(w)$ and
Proposition~\ref{proposition:diameter-vs-range} gives
$$
\Regret_T \le \sqrt{3}(1 + c) \max\left\{\sqrt{8}, \frac{1}{\sqrt{2c}} \right\} \sqrt{S \sum_{t=1}^T \|\ell_t\|_*^2} \; .
$$
It remains to find the minimum of $g(c) = \sqrt{3}(1 + c) \max\{\sqrt{8},
1/\sqrt{2c}\}$.  The function $g$ is strictly convex on $(0, \infty)$ and has
minimum at $c=1/16$ and $g(\frac{1}{16}) = \sqrt{3}(1+\frac{1}{16})\sqrt{8} \le
5.3$.
\end{proof}

\subsection{Proof of Regret Bound for \textsc{AdaFTRL}}
\label{section:ada-ftrl-regret-bound}

\begin{lemma}[Initial Regret Bound]
\label{lemma:initial-regret-bound}
\textsc{AdaFTRL} satisfies, for any $u \in K$ and any $T \ge 0$,
$$
\Regret_T(u) \le \left(1 + R(u) \right) \Delta_T \; .
$$
\end{lemma}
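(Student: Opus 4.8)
The plan is to feed the \textsc{AdaFTRL} regularizers $R_t = \Delta_{t-1} R$ into the generic FTRL bound of Lemma~\ref{lemma:generic-regret-bound} and then collapse everything using the definition of the sequence $\{\Delta_t\}$. Writing out that bound gives
$$
\Regret_T(u) \le R_{T+1}(u) + R_1^*(0) + \sum_{t=1}^T \Bigl( \Breg_{R_t^*}(-L_t,-L_{t-1}) - R_t^*(-L_t) + R_{t+1}^*(-L_t) \Bigr) \; ,
$$
so it suffices to control the three pieces $R_{T+1}(u)$, $R_1^*(0)$, and the sum.

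First I would record the scaling identities. By property 8 of Proposition~\ref{proposition:conjugate-properties}, $R_t^* = \Delta_{t-1} R^*(\cdot/\Delta_{t-1})$, and a one-line chain-rule computation shows that rescaling a conjugate rescales its Bregman divergence, i.e. $\Breg_{R_t^*}(x,y) = \Delta_{t-1}\Breg_{R^*}(x/\Delta_{t-1}, y/\Delta_{t-1})$. Applied to $x=-L_t$, $y=-L_{t-1}$, this is exactly the $i=t$ summand in the definition~\eqref{equation:ada-ftrl} of $\Delta_t$, so $\Breg_{R_t^*}(-L_t,-L_{t-1}) = \Delta_t - \Delta_{t-1}$. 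Summing over $t$ telescopes to $\Delta_T - \Delta_0 = \Delta_T$, using $\Delta_0 = 0$.

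Next I would dispose of the remaining summand terms $-R_t^*(-L_t)+R_{t+1}^*(-L_t)$ by showing they are non-positive. Since $R \ge 0$ and $\{\Delta_t\}$ is non-decreasing, $R_{t+1} = \Delta_t R \ge \Delta_{t-1} R = R_t$ pointwise; monotonicity of the Fenchel conjugate then gives $R_{t+1}^* \le R_t^*$ everywhere, in particular $R_{t+1}^*(-L_t) - R_t^*(-L_t) \le 0$. For the boundary terms, $R_{T+1}(u) = \Delta_T R(u)$ immediately, and $R_1 = \Delta_0 R \equiv 0$ so that $R_1^*(0) = -\inf_{w\in K} R_1(w) = 0$. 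Combining everything yields $\Regret_T(u) \le \Delta_T R(u) + 0 + \Delta_T = (1+R(u))\Delta_T$, as claimed.

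The one point demanding care --- and the main obstacle --- is the degenerate situation $\Delta_{t-1}=0$, which occurs while $L_{t-1}=0$ and makes both the rescaling by $1/\Delta_{t-1}$ and the strong convexity of $R_t = \Delta_{t-1}R$ (needed to invoke Lemma~\ref{lemma:generic-regret-bound}) break down. I would handle this exactly as the algorithm's definition prescribes, interpreting the $\Delta_{t-1}=0$ terms through the limits $\lim_{a\to 0^+} a\,\Breg_{R^*}(-L_t/a,-L_{t-1}/a)$ and $\lim_{a\to0^+}\argmin_{w \in K}(\langle L_{t-1},w\rangle + aR(w))$ guaranteed to exist by Lemmas~\ref{lemma:limit-bregman-divergence} and~\ref{lemma:prediction-limit-existence}; along these limits the identity $\Breg_{R_t^*}(-L_t,-L_{t-1})=\Delta_t-\Delta_{t-1}$ and the monotonicity inequality persist, so the telescoping argument survives intact.
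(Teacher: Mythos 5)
Your argument is, in its main body, the same as the paper's: you apply Lemma~\ref{lemma:generic-regret-bound} to $R_t = \Delta_{t-1}R$, drop the terms $-R_t^*(-L_t)+R_{t+1}^*(-L_t)$ by monotonicity of conjugation (using $R\ge 0$ and $\Delta_{t-1}\le\Delta_t$), identify $\Breg_{R_t^*}(-L_t,-L_{t-1})=\Delta_t-\Delta_{t-1}$ via property 8 of Proposition~\ref{proposition:conjugate-properties} so that the sum telescopes to $\Delta_T$, and evaluate $R_{T+1}(u)=\Delta_T R(u)$ and $R_1^*(0)=0$. All of this is correct and matches the paper's proof.

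The gap is in your final paragraph, and it is precisely the point the paper singles out as the technical difficulty. When $\Delta_{t-1}=0$, the regularizer $R_t\equiv 0$ is not strongly convex, so Lemma~\ref{lemma:generic-regret-bound} cannot be invoked at all: its proof uses $w_t=\grad R_t^*(-L_{t-1})$ (property 2 of Proposition~\ref{proposition:conjugate-properties}), and for $R_t\equiv 0$ the conjugate $R_t^*$ is a support function, not differentiable in general, while $w_t$ is defined by a tie-breaking limit. Interpreting the summands through the limits of Lemma~\ref{lemma:limit-bregman-divergence} merely \emph{defines} the quantities on the right-hand side; it does not prove that the inequality $\Regret_T(u)\le R_{T+1}(u)+R_1^*(0)+\sum_{t=1}^T\Breg_{R_t^*}(-L_t,-L_{t-1})$ holds for the actual algorithm in those rounds, which is exactly what needs proof. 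The paper closes this hole with a perturbation: run the algorithm with $\Delta_0=\epsilon>0$, so that every $R_t$ is strongly convex and Lemma~\ref{lemma:generic-regret-bound} applies verbatim, and then let $\epsilon\to 0^+$, using the lemmas of~\ref{section:limits} to see that the predictions and the $\Delta_t$'s of the perturbed algorithm converge to those of \textsc{AdaFTRL}. Your sketch would be complete if you either adopted this perturbation argument, or re-derived Lemma~\ref{lemma:generic-regret-bound} for the degenerate rounds using generalized Bregman divergences with the subgradient choice $\grad R_t^*(-L_{t-1}):=w_t$; as written, ``the identities persist'' is an assertion, not an argument. A minor side issue: $\Delta_{t-1}=0$ is not equivalent to $L_{t-1}=0$; for instance, on $K=[0,1]$ with $R(w)=w^2$ and $\ell_t=1$ for all $t$, one has $\Delta_t=0$ and $w_t=0$ for every $t$ while $L_t=t\ne 0$.
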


\begin{proof}
Recall from \eqref{equation:ada-ftrl} that $R_t(w) = \Delta_{t-1} R(w)$. Since $R$ is non-negative,
$\{R_t\}_{t=1}^\infty$ is non-decreasing.  Hence, $R_t^*(\ell) \ge
R_{t+1}^*(\ell)$ for every $\ell \in V^*$ and thus $R_t^*(-L_t) -
R_{t+1}^*(-L_t) \ge 0$.  So, by Lemma~\ref{lemma:generic-regret-bound},
\begin{equation}
\label{equation:regret-bound-inequality}
\Regret_T(u) \le R_{T+1}(u) + R_1^*(0) + \sum_{t=1}^{T} \Breg_{R_t^*}(-L_t, -L_{t-1}) \; .
\end{equation}
Technically, \eqref{equation:regret-bound-inequality} is not justified since
$R_t$ might not be strongly convex. This happens when $\Delta_{t-1} = 0$. In
order to justify \eqref{equation:regret-bound-inequality}, we consider a
different algorithm that initializes $\Delta_0 = \epsilon$ where $\epsilon >
0$; that ensures that $\Delta_{t-1} > 0$ and $R_t$ is strongly convex.
Applying  Lemma~\ref{lemma:generic-regret-bound} and then taking limit
$\epsilon \to 0$, yields \eqref{equation:regret-bound-inequality}.

Since, $\Breg_{R_t^*}(u,v) = \Delta_{t-1} \Breg_{R^*}(\frac{u}{\Delta_{t-1}},
\frac{v}{\Delta_{t-1}})$ by definition of Bregman divergence and property 8 of
Proposition~\ref{proposition:conjugate-properties}, we have $\sum_{t=1}^T
\Breg_{R_t^*}(-L_t, -L_{t-1}) = \Delta_T$.
\end{proof}

\begin{lemma}[Recurrence]
\label{lemma:gap-recurrence}
Let $D = \sup_{u, v \in K} \|u -v\|$ be the diameter of $K$.  The sequence
$\{\Delta_t\}_{t=1}^\infty$ generated by \textsc{AdaFTRL} satisfies for any $t
\ge 1$,
$$
\Delta_t \le \Delta_{t-1} + \min \left\{ D\|\ell_t\|_*, \ \frac{\|\ell_t\|_*^2}{2\lambda \Delta_{t-1}} \right\} \; .
$$
\end{lemma}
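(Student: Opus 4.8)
The plan is to read off a one-step recurrence from the defining sum~\eqref{equation:ada-ftrl} and then bound the single increment in two complementary ways, using the strong smoothness (property~4) and diameter (property~6) estimates for $R^*$ in Proposition~\ref{proposition:conjugate-properties}. Since the sum defining $\Delta_t$ telescopes, its last summand is exactly
$$
\Delta_t - \Delta_{t-1} = \Delta_{t-1}\,\Breg_{R^*}\!\left(-\frac{L_t}{\Delta_{t-1}}, -\frac{L_{t-1}}{\Delta_{t-1}}\right),
$$
so it suffices to bound this quantity by each of the two terms inside the minimum. The two points passed to the Bregman divergence differ by $-L_t/\Delta_{t-1} + L_{t-1}/\Delta_{t-1} = -\ell_t/\Delta_{t-1}$, and hence the dual norm of their difference is $\|\ell_t\|_*/\Delta_{t-1}$; both bounds then follow by substituting this into the appropriate inequality and tracking how the factor $\Delta_{t-1}$ cancels.

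First I would apply property~6, which states $\Breg_{R^*}(x,y) \le D\|x-y\|_*$; with $\|x-y\|_* = \|\ell_t\|_*/\Delta_{t-1}$ the prefactor $\Delta_{t-1}$ cancels entirely and the increment is at most $D\|\ell_t\|_*$. Next I would apply property~4, i.e.\ that $R^*$ is $\tfrac{1}{\lambda}$-strongly smooth, $\Breg_{R^*}(x,y)\le \tfrac{1}{2\lambda}\|x-y\|_*^2$; now one factor of $\Delta_{t-1}$ survives in the denominator and the increment is at most $\|\ell_t\|_*^2/(2\lambda\Delta_{t-1})$. Taking the minimum of these two upper bounds yields the stated recurrence.

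The only point needing care is the degenerate case $\Delta_{t-1}=0$. There the second bound is vacuous, so the minimum reduces to $D\|\ell_t\|_*$, while the increment is \emph{defined} as the limit $\lim_{a\to 0^+} a\,\Breg_{R^*}(-L_t/a,-L_{t-1}/a)$. Because the diameter estimate $a\,\Breg_{R^*}(-L_t/a,-L_{t-1}/a)\le D\|\ell_t\|_*$ holds for \emph{every} $a>0$ uniformly, it passes to the limit, so $\Delta_t-\Delta_{t-1}\le D\|\ell_t\|_*$ still holds. I do not expect any genuine obstacle: the whole argument is a direct substitution into the two inequalities of Proposition~\ref{proposition:conjugate-properties}, and the homogeneity bookkeeping (watching the powers of $\Delta_{t-1}$ cancel) is the only thing to keep straight.
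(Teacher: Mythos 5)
Your proof is correct and takes essentially the same route as the paper's: it reads off the one-step increment $\Delta_t-\Delta_{t-1}=\Delta_{t-1}\Breg_{R^*}\bigl(-L_t/\Delta_{t-1},-L_{t-1}/\Delta_{t-1}\bigr)$, bounds it via properties 4 and 6 of Proposition~\ref{proposition:conjugate-properties}, and takes the minimum. Your explicit handling of the degenerate case $\Delta_{t-1}=0$ (passing the uniform bound $a\,\Breg_{R^*}(-L_t/a,-L_{t-1}/a)\le D\|\ell_t\|_*$ to the limit) is a detail the paper leaves implicit, and it is handled correctly.
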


\begin{proof}
By definition, $\Delta_t$ satisfies the recurrence
$$
\Delta_t = \Delta_{t-1} + \Delta_{t-1} \Breg_{R^*}\left( - \frac{L_t}{\Delta_{t-1}}, - \frac{L_{t-1}}{\Delta_{t-1}} \right) \; .
$$
Using parts 4 and 6 of Proposition~\ref{proposition:conjugate-properties}, we
can upper bound $\Breg_{R^*}\left( - \frac{L_t}{\Delta_{t-1}}, -
\frac{L_{t-1}}{\Delta_{t-1}} \right)$ with two different quantities.  Taking the
minimum of the two quantities finishes the proof.
\end{proof}

The recurrence of Lemma~\ref{lemma:gap-recurrence} can be simplified. Defining
$$
a_t = \|\ell_t\|_* \max \left\{D, \frac{1}{\sqrt{2 \lambda}} \right\} \; ,
$$
we get a recurrence
$$
\Delta_t \le \Delta_{t-1} + \min \left\{ a_t, \ \frac{a_t^2}{\Delta_{t-1}} \right\} \; .
$$
The next lemma solves this recurrence, by giving an explicit upper bound on
$\Delta_T$ in terms of $a_1, a_2, \dots, a_T$.

\begin{lemma}[Solution of the Recurrence]
\label{lemma:recurrence-solution}
Let $\{a_t\}_{t=1}^\infty$ be any sequence of non-negative real numbers.
Suppose that $\{\Delta_t\}_{t=0}^\infty$ is a sequence of non-negative
real numbers satisfying
$$
\Delta_0 = 0 \qquad \text{and} \qquad
\Delta_t \le \Delta_{t-1} + \min \left\{ a_t, \ \frac{a_t^2}{\Delta_{t-1}} \right\} \quad \text{for any $t \ge 1$} \; .
$$
Then, for any $T \ge 0$,
$$
\Delta_T \le \sqrt{3 \sum_{t=1}^T a_t^2} \; .
$$
\end{lemma}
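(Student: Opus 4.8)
The plan is to prove, by induction on $T$, the equivalent squared statement $\Delta_T^2 \le 3 \sum_{t=1}^T a_t^2$; the square is the natural quantity here because its increments telescope cleanly, whereas bounding $\Delta_t$ directly and then squaring the recurrence would introduce a troublesome $a_t^4/\Delta_{t-1}^2$ term. The base case $T = 0$ is immediate from $\Delta_0 = 0$, so the content is entirely in the per-step increment $\Delta_t^2 - \Delta_{t-1}^2$.

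Writing $\delta_t = \Delta_t - \Delta_{t-1}$ for the increment, the hypothesis gives $\delta_t \le \min\{a_t, a_t^2/\Delta_{t-1}\}$, and in particular $\delta_t \le a_t$ always (this form of the bound survives even in the degenerate case $\Delta_{t-1} = 0$, where $a_t^2/\Delta_{t-1}$ is not literally defined). I would factor
\[
\Delta_t^2 - \Delta_{t-1}^2 = (\Delta_t - \Delta_{t-1})(\Delta_t + \Delta_{t-1}) = \delta_t (2\Delta_{t-1} + \delta_t) = 2\delta_t \Delta_{t-1} + \delta_t^2 \; .
\]
The crux of the argument is that the two halves of the minimum can be spent separately on the two resulting terms: the bound $\delta_t \le a_t^2/\Delta_{t-1}$ gives $\delta_t \Delta_{t-1} \le a_t^2$, hence $2\delta_t \Delta_{t-1} \le 2 a_t^2$, while the bound $\delta_t \le a_t$ gives $\delta_t^2 \le a_t^2$. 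Adding these yields the clean per-step estimate $\Delta_t^2 - \Delta_{t-1}^2 \le 3 a_t^2$, which is exactly what produces the constant $3$.

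Summing this telescoping inequality over $t = 1, 2, \dots, T$ and using $\Delta_0 = 0$ gives $\Delta_T^2 \le 3 \sum_{t=1}^T a_t^2$, and taking square roots finishes the proof. I expect the only subtlety to be the book-keeping for two boundary cases, both of which I would dispatch in a line. First, one may assume $\delta_t \ge 0$: if $\delta_t \le 0$ then $\Delta_t \le \Delta_{t-1}$ and the squared sequence does not increase, so $\Delta_t^2 - \Delta_{t-1}^2 \le 0 \le 3 a_t^2$ trivially. Second, when $\Delta_{t-1} = 0$ the cross term $2\delta_t \Delta_{t-1}$ vanishes and the factored identity reduces to $\Delta_t^2 = \delta_t^2 \le a_t^2 \le 3 a_t^2$, so the per-step estimate still holds. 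Apart from these the argument is entirely mechanical; the real idea is the split of the minimum across the cross term and the square term.
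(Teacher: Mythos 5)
Your proof is correct and takes essentially the same route as the paper's: both telescope $\Delta_T^2 = \sum_{t=1}^T \left( (\Delta_t - \Delta_{t-1})^2 + 2(\Delta_t - \Delta_{t-1})\Delta_{t-1} \right)$ and spend the $a_t$ branch of the minimum on the squared term and the $a_t^2/\Delta_{t-1}$ branch on the cross term, yielding $3a_t^2$ per step. Your explicit treatment of the boundary cases ($\Delta_t \le \Delta_{t-1}$ and $\Delta_{t-1} = 0$) is a minor tidiness improvement over the paper, which leaves them implicit.
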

\begin{proof}
Observe that
\[
\Delta_T^2
= \sum_{t=1}^T \Delta_t^2 - \Delta_{t-1}^2
= \sum_{t=1}^T (\Delta_t - \Delta_{t-1})^2 + 2 (\Delta_t - \Delta_{t-1}) \Delta_{t-1} \; .
\]
We bound each term in the sum separately.
The left term of the minimum inequality in the definition of $\Delta_t$ gives
\[
(\Delta_t - \Delta_{t-1})^2 \le a_t^2,
\]
while the right term gives
\[
2 (\Delta_t - \Delta_{t-1}) \Delta_{t-1} \le 2 a_t^2 \; .
\]
So, we conclude
\[
\Delta_T^2
\le
3 \sum_{t=1}^T a_t^2\; .
\]
\end{proof}

Theorem~\ref{theorem:ada-ftrl-regret-bound}
follows from
Lemmas~\ref{lemma:initial-regret-bound},~\ref{lemma:gap-recurrence}~and~\ref{lemma:recurrence-solution}.

\section{Limits}
\label{section:limits}

In this section, we show that prediction of \textsc{AdaFTRL} is correctly
defined when the regularizer is multiplied by zero.

\begin{lemma}[Prediction for Zero Regularizer]
\label{lemma:prediction-limit-existence}
Let $K$ be non-empty bounded closed convex subset of a finite dimensional
normed real vector space $(V, \|\cdot\|)$.  Let $R:K \to \R$ be strictly convex
and lower semi-continuous, and let $L \in V^*$. The limit
\begin{equation}
\label{equation:limit}
\lim_{\eta \to +\infty}
\argmin_{w \in K} \left( \langle L, w \rangle + \frac{1}{\eta} R(w) \right)
\end{equation}
exists and it is equal to the unique minimizer of $R(w)$ over the set (of minimizers)
$$
\left\{ w \in K ~:~ \langle L, w \rangle = \inf_{v \in K} \langle L, v \rangle \right\} \; .
$$
\end{lemma}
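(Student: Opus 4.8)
The plan is to exploit the compactness of $K$ together with the strict convexity and lower semi-continuity of $R$ throughout. Write $w_\eta = \argmin_{w \in K}\left(\langle L, w\rangle + \frac{1}{\eta} R(w)\right)$, set $m = \inf_{v \in K}\langle L, v\rangle$, and let $K^* = \{w \in K : \langle L, w\rangle = m\}$ be the set of minimizers appearing in the statement. First I would verify that all these objects are well defined: since $K$ is compact and $R$ is lower semi-continuous, the objective $\langle L, \cdot\rangle + \frac{1}{\eta} R(\cdot)$ attains its infimum, and strict convexity of $R$ makes the minimizer $w_\eta$ unique; likewise $K^*$ is a non-empty compact convex set (the argmin of a continuous linear functional over a compact convex set) on which the strictly convex lower semi-continuous function $R$ attains a unique minimizer $w^*$.

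The heart of the argument is to show that every cluster point of $w_\eta$ as $\eta \to +\infty$ lies in $K^*$ and minimizes $R$ there. Comparing $w_\eta$ against the competitor $w^*$ in the defining optimality inequality gives $\langle L, w_\eta\rangle - m + \frac{1}{\eta} R(w_\eta) \le \frac{1}{\eta} R(w^*)$, using $\langle L, w^*\rangle = m$. Since $\langle L, w_\eta\rangle \ge m$, this immediately yields $R(w_\eta) \le R(w^*)$; and, bounding $R(w_\eta)$ from below by $R_{\min} := \min_{w\in K} R(w)$, which is finite by lower semi-continuity on the compact set $K$, it also yields $0 \le \langle L, w_\eta\rangle - m \le \frac{1}{\eta}\left(R(w^*) - R_{\min}\right)$. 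Hence $\langle L, w_\eta\rangle \to m$ as $\eta \to +\infty$, while $R(w_\eta)$ never exceeds $R(w^*)$.

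I would then conclude by a standard subsequence argument. Any sequence $\eta_n \to +\infty$ has, by compactness of $K$, a subsequence along which $w_{\eta_n}$ converges to some $\bar w \in K$. Continuity of $\langle L, \cdot\rangle$ combined with $\langle L, w_{\eta_n}\rangle \to m$ forces $\langle L, \bar w\rangle = m$, so $\bar w \in K^*$; lower semi-continuity of $R$ together with $R(w_{\eta_n}) \le R(w^*)$ gives $R(\bar w) \le \liminf_n R(w_{\eta_n}) \le R(w^*)$. Since $w^*$ is the unique minimizer of $R$ over $K^*$ and $\bar w \in K^*$, we get $\bar w = w^*$. As every cluster point equals $w^*$ and $K$ is compact, the net $w_\eta$ converges to $w^*$, which is exactly the claim.

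I expect the main obstacle to be the bookkeeping that pins the cluster points inside $K^*$: one must extract the quantitative estimate $\langle L, w_\eta\rangle - m = O(1/\eta)$ from the optimality inequality, and this rests on the lower bound $R_{\min}$ supplied by lower semi-continuity on the compact domain; without it the feasibility-in-the-limit step could break down. Strict convexity enters only at the end, to guarantee uniqueness of $w^*$ and thereby force the single cluster point to be $w^*$.
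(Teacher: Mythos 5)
Your proof is correct and follows essentially the same route as the paper's: compare the objective at $w_\eta$ against the candidate limit $w^*$ to obtain $R(w_\eta) \le R(w^*)$ and $\langle L, w_\eta\rangle \to m$, then use compactness, lower semi-continuity, and strict convexity to force every cluster point to equal $w^*$. The only cosmetic differences are that the paper normalizes $R$ to be non-negative (where you keep $R_{\min}$ explicit) and encodes lower semi-continuity via the closed sublevel set $\{w \in K : R(w) \le R(w^*)\}$ rather than via a $\liminf$ argument.
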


Before we give the proof, we illustrate the lemma on a simple
example.  Let $K = [-1,1]^2$ be a closed square in $\R^2$ and let
$R(w) = \|w\|_2^2$. Let $L = (1,0)$. The minimizers are
$$
\argmin_{w \in K} \langle L, w \rangle = \{ (-1,y) ~:~ y \in [-1,1] \} \; .
$$
The minimizer with the smallest value of $R(w)$ is $(-1,0)$. Hence the lemma
implies that
$$
\lim_{\eta \to +\infty} \argmin_{w \in K}
\left( \langle L, w \rangle + \frac{1}{\eta} \|w\|_2^2 \right) = (-1, 0) \; .
$$

\begin{proof}[Proof of Lemma~\ref{lemma:prediction-limit-existence}]
Without loss of generality, we can assume that $R(w)$ is non-negative for any
$w \in K$.  For otherwise, we can replace $R(w)$ with $R'(w) = R(w) - \inf_{v
\in K} R(v)$.

Since $K$ is a non-empty bounded closed convex subset of a finite dimensional
normed vector space, it is compact and $r^* = \min_{w \in K} \langle L, w
\rangle$ exists and is attained at some $w \in K$. Consider the hyperplane
$$
H = \{ w \in V ~:~ \langle L, w \rangle = r^* \} \; .
$$
The intersection $H \cap K$ is a non-empty compact convex set.
Let
$$
v^* = \argmin_{v \in K \cap H} R(v) \; .
$$
The existence of $v^*$ follows from compactness of $H \cap K$ and lower
semi-continuity of $R(v)$.  Uniqueness of $v^*$ follows from strict convexity
of $R(v)$. We show that the limit (\ref{equation:limit}) equals $v^*$.

By the definition of $H$,
\begin{equation}
\label{equation:limit-is-minimizer}
v^* \in \argmin_{w \in K} \ \langle L, w \rangle \; .
\end{equation}
Let $S = \{ w \in K ~:~ R(w) \le R(v^*) \}$. Since $R(w)$ is lower
semi-continuous $S$ is closed. Since $R(w)$ is strictly convex, $S \cap H =
\{v^*\}$.

For any $\eta > 0$, let
$$
w(\eta) = \argmin_{w \in K} \left( \langle L, w \rangle + \frac{1}{\eta} R(w) \rangle \right) \; .
$$
We prove that $w(\eta) \in S$. Indeed, by optimality of $v^*$ and $w(\eta)$,
$$
\frac{1}{\eta} R(w(\eta))  + \langle L, w(\eta) \rangle
\le
\frac{1}{\eta} R(v^*)  + \langle L, v^* \rangle
\le
\frac{1}{\eta} R(v^*)  + \langle L, w(\eta) \rangle
$$
and hence $R(w(\eta)) \le R(v^*)$.

By non-negativity of $R$ and optimality of $w(\eta)$ we have
$$
\langle L, w(\eta) \rangle
\le \langle L, w(\eta) \rangle + \frac{1}{\eta} R(w(\eta))
\le \langle L, v^* \rangle + \frac{1}{\eta} R(v^*) \; .
$$
Taking the limit $\eta \to +\infty$, we see that
$$
\lim_{\eta \to +\infty} \langle L, w(\eta) \rangle
\le\lim_{\eta \to +\infty} \left( \langle L, v^* \rangle + \frac{1}{\eta} R(v^*) \right)
= \langle L, v^* \rangle \; .
$$
From~(\ref{equation:limit-is-minimizer}) we have
$\langle L, v^* \rangle \le \langle L, w \rangle$ for any $w$, and therefore
\begin{equation}
\label{equation:limit-equality}
\lim_{\eta \to +\infty} \langle L, w(\eta) \rangle = \langle L, v^* \rangle \; .
\end{equation}

Consider any sequence $\{\eta_t\}_{t=1}^\infty$ of positive numbers approaching
$+\infty$.  Since $K$ is compact, $w(\eta_t)$ has a convergent subsequence.
Thus $\{w(\eta_t)\}_{t=1}^\infty$ has at least one accumulation point; let
$w^*$ be any of them. We will show that $w^* = v^*$.

Consider a subsequence $\{\xi_t\}_{t=1}^\infty$ of $\{\eta_t\}_{t=1}^\infty$
such that $\lim_{t \to \infty} w(\xi_t) = w^*$.  Since $w(\xi_t) \in S$ and $S$
is closed, $w^* \in S$.  From (\ref{equation:limit-equality}) we have $\langle
L, w^* \rangle = \langle L, v^* \rangle$ and hence $w^* \in H$. Thus $w^* \in S
\cap H$.  Since $v^*$ is the only point in $S \cap H$ we must have $w^* = v^*$.
\end{proof}

\begin{lemma}[Limit of Bregman Divergence]
\label{lemma:limit-bregman-divergence}
Let $K$ be a non-empty bounded closed convex subset of a finite dimensional
normed real vector space $(V, \|\cdot\|)$.  Let $R:K \to \R$ be a strongly
convex lower semi-continuous function bounded from above. Then, for any $x,y
\in V^*$,
$$
\lim_{a \to 0^+} a \Breg_{R^*}(x/a, y/a) = \langle x, u - v \rangle \\
$$
where
\begin{align*}
u = \lim_{a \to 0^+} \argmin_{w \in K} \left( a R(w) - \langle x, w \rangle \right)
\quad \text{and} \quad
v = \lim_{a \to 0^+} \argmin_{w \in K} \left( a R(w) - \langle y, w \rangle \right) \; .
\end{align*}
\end{lemma}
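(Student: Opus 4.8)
The plan is to reduce everything to the minimizers that define $u$ and $v$, and then let the residual terms vanish in the limit. Throughout fix $a>0$ and recall from property~2 of Proposition~\ref{proposition:conjugate-properties} that $\grad R^*(\ell) = \argmin_{w \in K}(R(w) - \langle \ell, w\rangle)$. Since multiplying the objective by the positive constant $a$ does not change the minimizer, setting $p(a) = \grad R^*(x/a)$ and $q(a) = \grad R^*(y/a)$ gives $p(a) = \argmin_{w \in K}(aR(w) - \langle x, w\rangle)$ and $q(a) = \argmin_{w \in K}(aR(w) - \langle y, w\rangle)$. Identifying $\eta = 1/a$ and $L = -x$ (resp.\ $L = -y$), Lemma~\ref{lemma:prediction-limit-existence} guarantees that $\lim_{a \to 0^+} p(a) = u$ and $\lim_{a \to 0^+} q(a) = v$, so that the objects $u,v$ in the statement are well defined in the first place.

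First I would expand the divergence and clear the factor $a$ using property~3, which states $R^*(\ell) = \langle \ell, \grad R^*(\ell)\rangle - R(\grad R^*(\ell))$. Writing out $a\Breg_{R^*}(x/a, y/a) = aR^*(x/a) - aR^*(y/a) - a\langle \grad R^*(y/a), (x-y)/a\rangle$ and substituting $aR^*(x/a) = \langle x, p(a)\rangle - aR(p(a))$, $aR^*(y/a) = \langle y, q(a)\rangle - aR(q(a))$, together with $a\langle q(a), (x-y)/a\rangle = \langle q(a), x - y\rangle$, the terms involving $\langle y, q(a)\rangle$ cancel and I obtain the clean identity
\begin{equation*}
a\Breg_{R^*}(x/a, y/a) = \langle x, p(a) - q(a)\rangle + a\bigl(R(q(a)) - R(p(a))\bigr) \; .
\end{equation*}

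Finally I would pass to the limit $a \to 0^+$. By continuity of the pairing $\langle \cdot, \cdot\rangle$ together with the convergence $p(a) \to u$ and $q(a) \to v$ established above, the first term tends to $\langle x, u - v\rangle$, the claimed value. For the second term I use that $R$ is bounded on $K$: it is bounded above by hypothesis, and bounded below because it is lower semi-continuous on the compact set $K$ and hence attains its minimum. Thus $R(q(a)) - R(p(a))$ stays bounded while the prefactor $a \to 0$, so the residual $a\bigl(R(q(a)) - R(p(a))\bigr)$ vanishes. Combining the two limits yields $\lim_{a\to 0^+} a\Breg_{R^*}(x/a, y/a) = \langle x, u - v\rangle$.

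The computation is essentially bookkeeping once property~3 is invoked; the only genuine point to check is that the residual term really disappears, which is exactly where the hypothesis that $R$ is bounded from above (and the compactness of $K$ giving a lower bound) is needed. A secondary subtlety, easily overlooked, is the appeal to Lemma~\ref{lemma:prediction-limit-existence} to know that the limiting minimizers exist at all, since the regularizer weight $a$ is being driven to zero and the unregularized problem $\argmin_{w\in K}\langle -x, w\rangle$ is generally not single-valued.
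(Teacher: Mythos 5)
Your proposal is correct and follows essentially the same route as the paper's own proof: expand $a\Breg_{R^*}(x/a,y/a)$ via property~3 of Proposition~\ref{proposition:conjugate-properties} to get $\langle x, \grad R^*(x/a) - \grad R^*(y/a)\rangle$ plus an $a\bigl(R(\cdot)-R(\cdot)\bigr)$ residual, identify the gradients with the regularized minimizers via property~2 and Lemma~\ref{lemma:prediction-limit-existence}, and kill the residual using that $R$ is bounded above by hypothesis and below by lower semi-continuity on the compact set $K$. The only cosmetic difference is that you state the algebraic cancellation as a standalone identity before taking limits, which the paper does inline.
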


\begin{proof}
Using property 3 of Proposition~\ref{proposition:conjugate-properties} we can write
the divergence
\begin{align*}
a \Breg_{R^*}(x/a, y/a) & = a R^*(x/a) - a R^*(y/a) - \langle x - y, \grad R^*(y/a) \rangle \\
& =
 a \left[ \langle x/a, \grad R^*(x/a) \rangle - R(\grad R^*(x/a)) \right] \\
& \qquad - a \left[ \langle y/a, \grad R^*(y/a) \rangle - R(\grad R^*(y/a)) \right]
- \langle x - y, \grad R^*(y/a) \rangle \\
& =
\langle x, \grad R^*(x/a) - \grad R^*(y/a) \rangle - a R(\grad R^*(x/a))
+ a R(\grad R^*(y/a)) \; .
\end{align*}
Property 2 of Proposition~\ref{proposition:conjugate-properties} implies that
\begin{align*}
u = \lim_{a \to 0^+} \grad R^*(x/a) & = \lim_{a \to 0^+} \argmin_{w \in K} \left( a R(w) - \langle x, w \rangle \right) \; , \\
v = \lim_{a \to 0^+} \grad R^*(y/a) & = \lim_{a \to 0^+} \argmin_{w \in K} \left( a R(w) - \langle y, w \rangle \right) \; .
\end{align*}
The limits on the right exist according to
Lemma~\ref{lemma:prediction-limit-existence}.  They are simply the minimizers
$u = \argmin_{w \in K} - \langle x, w \rangle$ and $v= \argmin_{w \in K} -
\langle y, w \rangle$ where ties in $\argmin$ are broken according to smaller
value of $R(w)$.

By assumption $R(w)$ is upper bounded. It is also lower bounded, since it is
defined on a compact set and it is lower semi-continuous. Thus,
\begin{align*}
& \lim_{a \to 0^+} a \Breg_{R^*}(x/a, y/a) \\
& = \lim_{a \to 0^+} \langle x, \grad R^*(x/a) - \grad R^*(y/a) \rangle - a R(\grad R^*(x/a)) + a R(\grad R^*(y/a)) \\
& = \lim_{a \to 0^+} \langle x, \grad R^*(x/a) - \grad R^*(y/a) \rangle = \langle x, u - v \rangle \; .
\end{align*}
\end{proof}

\section{Proofs for \textsc{SOLO FTRL}}
\label{section:solo-ftrl-proofs}

\begin{proof}[Proof of Lemma~\ref{lemma:sum-of-square-roots-inverses}]
We use the inequality $x/\sqrt{x+y} \le 2(\sqrt{x+y} - \sqrt{y})$ which holds
for non-negative $x,y$ that are not both zero. Substituting $x = a_t$ and $y=\sum_{i=1}^{t-1} a_i$,
we get that for any $t \ge 1$,
$$
\frac{a_t}{\sqrt{\sum_{i=1}^t a_i}} \le 2 \sqrt{\sum_{i=1}^t a_i} \ - \ 2 \sqrt{\sum_{i=1}^{t-1} a_i} \; .
$$
Summing the above inequality over all $t=1,2,\dots,T$, the right side telescopes to
$2 \sqrt{\sum_{t=1}^T a_t}$.
\end{proof}
\section{Proofs for Scale-Free Mirror Descent}
\label{section:mirror-descent-proofs}

\begin{proof}[Proof of Lemma~\ref{lemma:mirror-descent-regret}]
Let
\begin{align*}
\Psi_{t+1}(w)
& = \langle \ell_t, w \rangle + \Breg_{R_t}(w,w_t) \\
& = \langle \ell_t, w \rangle + R_t(w) - R_t(w_t) - \langle \grad R_t(w_t) , w - w_t \rangle \; .
\end{align*}
Then, $w_{t+1} = \argmin_{w \in K} \Psi_{t+1}(w)$. Note that $\grad
\Psi_{t+1}(w) = \ell_t + \grad R_t(w) - \grad R_t(w_t)$. The optimality
condition for $w_{t+1}$ states that $\langle \grad \Psi_{t+1}(w_{t+1}), u -
w_{t+1} \rangle \ge 0$ for all $u \in K$. Written explicitly,
$$
\langle \ell_t + \grad R_t(w_{t+1}) - \grad R_t(w_t), u - w_{t+1} \rangle \ge 0 \; .
$$
Adding $\langle \ell_t, w_{t+1} - w_t \rangle$ to both sides and rearranging,
we have
\begin{align*}
\langle \ell_t, w_t - u \rangle
& \le \langle \grad R_t(w_{t+1}) - \grad R_t(w_t), u - w_{t+1} \rangle  + \langle \ell_t, w_t - w_{t+1} \rangle \\
& = \langle \ell_t, w_t - w_{t+1} \rangle - \Breg_{R_t}(w_{t+1}, w_t) + \Breg_{R_t}(u,w_t) - \Breg_{R_t}(u, w_{t+1}) \; .
\end{align*}
The last equality follows by from definition of Bregman divergence.  Summation
over all $t=1,2,\dots,T$ gives the final regret bound.
\end{proof}

\begin{proof}[Proof of Theorem~\ref{theorem:regret-scale-free-mirror-descent}]
Let $\eta_t = \frac{1}{\sqrt{\sum_{i=1}^t \|\ell_i\|_*^2}}$. We define $\eta_0
= +\infty$ and $1/\eta_0 = 0$.  Hence $R_t(w) = \frac{1}{\eta_t} R(w)$.  Since
$R_t$ is $\frac{\lambda}{\eta_t}$-strongly convex, we have
\begin{align*}
\langle \ell_t, w_t - w_{t+1} \rangle - \Breg_{R_t}(w_{t+1}, w_t)
& \le \|\ell_t\|_* \cdot \|w_t - w_{t+1}\| - \frac{\lambda}{2\eta_t} \|w_t - w_{t+1} \|^2 \\
& \le \max_{z \in \R} \left( \|\ell_t\|_* z - \frac{\lambda}{2 \eta_t} z^2 \right) \\
& = \frac{\eta_t}{2\lambda} \|\ell_t\|_*^2 \; .
\end{align*}
Combining the last inequality with Lemma~\ref{lemma:mirror-descent-regret}, we have
$$
\Regret_T(u) \le \sum_{t=1}^T \frac{\eta_t}{2\lambda} \|\ell_t\|_*^2 + \sum_{t=1}^T \left[ \Breg_{R_t}(u,w_t) - \Breg_{R_t}(u, w_{t+1}) \right] \; .
$$
Since $R_t(w) = \frac{1}{\eta_t} R(w)$, we have
\begin{align*}
\Regret_T(u)
& \le \frac{1}{2\lambda} \sum_{t=1}^T \eta_t \|\ell_t\|_*^2 + \sum_{t=1}^T \frac{1}{\eta_t} \left[\Breg_R(u, w_t) - \Breg_R(u, w_{t+1})\right] \\
& \le \frac{1}{2\lambda} \sum_{t=1}^T \eta_t \|\ell_t\|_*^2 + \sum_{t=1}^T \Breg_R(u, w_t) \left( \frac{1}{\eta_t} - \frac{1}{\eta_{t-1}} \right) \\
& \le \frac{1}{2\lambda} \sum_{t=1}^T \frac{\|\ell_t\|_*^2}{\sqrt{\sum_{i=1}^t \|\ell_i\|_*^2}} + \sup_{v \in K} \Breg_R(u,v) \sum_{t=1}^T \left( \frac{1}{\eta_t} - \frac{1}{\eta_{t-1}} \right) \\
& \le \frac{1}{\lambda} \sqrt{\sum_{t=1}^T \|\ell_t\|_*^2} + \sup_{v \in K} \Breg_R(u,v) \sqrt{\sum_{t=1}^T \|\ell_t\|_*^2} \qquad \text{(By Lemma~\ref{lemma:sum-of-square-roots-inverses})} \\
& = \left( \frac{1}{\lambda} + \sup_{v \in K} \Breg_R(u,v) \right) \sqrt{\sum_{t=1}^T \|\ell_t\|_*^2} \; .
\end{align*}
\end{proof}

\begin{proof}[Proof of Theorem~\ref{theorem:first-counter-example}]
We assume $d=1$. For $d \ge 2$, we simply embed the one-dimensional loss
vectors into the first coordinate of $\R^d$.  Consider the sequence
$$
(\ell_1, \ell_2, \dots, \ell_T) = ( \underbrace{- 1, -1, \dots, -1}_{\lceil T/2 \rceil}, \underbrace{+1, +1, \dots, +1}_{\lfloor T/2 \rfloor}) \; .
$$
The first half consists of $-1$'s, the second of $+1$'s. For $t \le \lceil T/2 \rceil$
$$
w_{t+1} = w_t + \frac{1}{\sqrt{t}} \; .
$$
Unrolling the recurrence and using $w_1 = 0$ we get
$$
w_t = \sum_{i=1}^{t-1} \frac{1}{\sqrt{i}} \qquad \text{(for $t \le \lceil T/2 \rceil + 1$)} \; .
$$
On the other hand, for $t \ge \lceil T/2 \rceil + 1$, we have
$$
w_{t+1} = w_t - \frac{1}{\sqrt{t}} \; .
$$
Unrolling the recurrence up to $w_{\lceil T/2 \rceil + 1}$ we get
$$
w_t
= w_{\lceil T/2 \rceil + 1} \ \ - \sum_{i=\lceil T/2 \rceil + 1}^{t-1} \frac{1}{\sqrt{i}}
= \sum_{i=1}^{\lceil T/2 \rceil} \frac{1}{\sqrt{i}} \ \ - \sum_{i=\lceil T/2 \rceil + 1}^{t-1} \frac{1}{\sqrt{i}}
\qquad \text{(for $t \ge \lceil T/2 \rceil + 1$)} \; .
$$
We are ready to lower bound the regret.
\begin{align*}
\Regret_T(0)
& = \sum_{t=1}^T \ell_t w_t \\
& = - \sum_{t=1}^{\lceil T/2 \rceil} w_t + \sum_{t=\lceil T/2 \rceil + 1}^T w_t  \\
& = - \sum_{t=1}^{\lceil T/2 \rceil} \sum_{i=1}^{t-1} \frac{1}{\sqrt{i}} \ \ + \sum_{t=\lceil T/2 \rceil + 1}^T \left( \sum_{i=1}^{\lceil T/2 \rceil} \frac{1}{\sqrt{i}} \ - \sum_{i=\lceil T/2 \rceil + 1}^{t-1} \frac{1}{\sqrt{i}} \right) \\
& = - \sum_{i=1}^{\lceil T/2 \rceil} \frac{\lceil T/2 \rceil - i}{\sqrt{i}} \ + \ \lfloor T/2 \rfloor \sum_{i=1}^{\lceil T/2 \rceil} \frac{1}{\sqrt{i}} \ \ - \sum_{i=\lceil T/2 \rceil + 1}^T \frac{T - i}{\sqrt{i}} \\
& = - \sum_{i=1}^{\lceil T/2 \rceil} \frac{\lceil T/2 \rceil - \lfloor T/2 \rfloor}{\sqrt{i}} \ + \ \sum_{i=1}^{T} \sqrt{i} \ \ - \ T \!\!\!\! \sum_{i=\lceil T/2 \rceil + 1}^T \frac{1}{\sqrt{i}} \\
& \ge - \sum_{i=1}^{\lceil T/2 \rceil} \frac{1}{\sqrt{i}} \ \ + \ \sum_{i=1}^{T} \sqrt{i} \ \ - \ T \!\!\!\! \sum_{i=\lceil T/2 \rceil + 1}^T \frac{1}{\sqrt{i}} \\
& \ge - 1 - \int_{i=1}^{\lceil T/2 \rceil} \frac{1}{\sqrt{x}} dx \ + \ \int_{0}^T \!\!\! \sqrt{x} \, dx \ \ - \ T \int_{\lceil T/2 \rceil}^T \frac{1}{\sqrt{x}} dx \\
& = - 1 - 2\left(\sqrt{\lceil T/2 \rceil} - 1 \right) + \frac{2}{3} T^{3/2} - 2 T \left(\sqrt{T} - \sqrt{\lceil T/2 \rceil} \right) \\
& \ge 1 - 2 \sqrt{\lceil T/2 \rceil} + \left( \frac{2}{3} - 2 + \sqrt{2} \right) T^{3/2} \; .
\end{align*}
The last expression is $\Omega(T^{3/2})$ with dominant term $(\frac{2}{3} - 2 +
\sqrt{2}) T^{3/2} \approx 0.08 \cdot T^{3/2}$.  For any $T \ge 42$, the
expression is lower bounded by $\frac{1}{20} T^{3/2}$.
\end{proof}

\begin{proof}[Proof of Theorem~\ref{theorem:second-counter-example}]
Let $e_1, e_2, \dots, e_d$ be the standard orthnormal basis of $\R^d$. Consider
the sequence of loss vectors
$$
(\ell_1, \ell_2, \dots, \ell_T) = ( \underbrace{-e_1, -e_1, \dots, -e_1}_{\lceil T/3 \rceil}, \underbrace{-e_2, -e_2, \dots, -e_2}_{\lfloor 2T/3 \rfloor}) \; .
$$
First, for any $t \ge \lceil T/3 \rceil + 1$,
\begin{align*}
\frac{w_{t,1}}{w_{t,2}}
& = \frac{\exp(-\sum_{i=1}^{t-1} \ell_{i,1}/\sqrt{i})}{\exp(-\sum_{i=1}^{t-1} \ell_{i,2}/\sqrt{i})} \\
& = \frac{\exp(\sum_{i=1}^{\lceil T/3 \rceil} 1/\sqrt{i})}{\exp(\sum_{i=\lceil T/3 \rceil + 1}^t 1/\sqrt{i})} \\
& \ge \frac{\exp(\sum_{i=1}^{\lceil T/3 \rceil} 1/\sqrt{i})}{\exp(\sum_{i=\lceil T/3 \rceil + 1}^T 1/\sqrt{i})} \\
& = \exp\left(\sum_{i=1}^{\lceil T/3 \rceil} \frac{1}{\sqrt{i}} \ \ - \ \sum_{i=\lceil T/3 \rceil + 1}^T 1/\sqrt{i} \right) \\
& \ge \exp\left( \int_{1}^{\lceil T/3 \rceil + 1} \frac{dx}{\sqrt{x}} \ \ - \ \int_{\lceil T/3 \rceil}^T \frac{dx}{\sqrt{x}} \right) \\
& = \exp\left( 2\sqrt{\lceil T/3 \rceil + 1} - 2 - (2\sqrt{T} - 2 \sqrt{\lceil T/3 \rceil}) \right) \\
& \ge \exp\left( \left( \frac{4}{\sqrt{3}} - 2 \right) \sqrt{T} - 2 \right) \\
& \ge 4 \; ,
\end{align*}
where the last inequality follows from the fact that $\exp\left( \left(
\frac{4}{\sqrt{3}} - 2 \right) \sqrt{T} - 2 \right)$ is an increasing function
of $T$ and the inequality can be easily verified for $T=120$. Since $w_{t,1} +
w_{t,1} \le 1$ and $w_{t,1} \ge 0$ and $w_{t,2} \ge 0$, the inequality
$w_{t,1}/w_{t,2} \ge 4$ implies that
$$
w_{t,2} \le \frac{1}{5} \qquad \qquad \text{(for any $t \ge \lceil T/3 \rceil + 1$)} \; .
$$
Now, we lower bound the regret. Since $T \ge 120$,
\begin{align*}
\Regret_T
& \ge \Regret_T(e_2) \\
& = \sum_{t=1}^T \langle \ell_t, w_t \rangle - \sum_{t=1}^T \langle \ell_t, e_2 \rangle \\
& = - \sum_{t=1}^{\lceil T/3 \rceil} w_{t,1} \ \ - \ \sum_{t=\lceil T/3 \rceil + 1}^T w_{t,2}  +  \lfloor 2T/3 \rfloor \\
& \ge - \lceil T/3 \rceil - \frac{1}{5} \lfloor 2T/3 \rfloor  +  \lfloor 2T/3 \rfloor \\
& \ge - T/3 - 1 - 2T/15 + 2T/3 - 1 \\
& = T/5 - 2 \\
& \ge T/6 \; .
\end{align*}
\end{proof}

\section{Lower Bound Proof}
\label{section:lower-bound-proof}

\begin{proof}[Proof of Theorem~\ref{theorem:simple-lower-bound}]
Pick $x,y \in K$ such that $\|x - y\| = D$. This is possible since $K$ is compact.
Since $\|x - y\| = \sup \{\langle \ell, x - y \rangle ~:~ \ell \in V^*, \|\ell\|_* = 1\}$
and the set $\{ \ell \in V^* ~:~ \|\ell\|_* = 1 \}$ is compact, there exists $\ell \in V^*$
such that
$$
\|\ell\|_* = 1 \qquad \text{and} \qquad \langle \ell, x - y \rangle = \|x - y\| = D \; .
$$
Let $Z_1, Z_2, \dots, Z_T$ be i.i.d. Rademacher variables, that is,
$\Pr[Z_t = +1] = \Pr[Z_t = -1] = 1/2$. Let $\ell_t = Z_t a_t \ell$.
Clearly, $\|\ell_t\|_* = a_t$. The lemma will be proved if we show that
(\ref{equation:simple-lower-bound}) holds with positive probability.
We show a stronger statement that the inequality holds in expectation, i.e.,
$\Exp[\Regret_T] \ge \frac{D}{\sqrt{8}} \sqrt{\sum_{t=1}^T a_t^2}$. Indeed,
\begin{align*}
\Exp\left[ \Regret_T \right]
& \ge \Exp\left[ \sum_{t=1}^T \langle \ell_t, w_t \rangle \right] - \Exp\left[\min_{u \in \{x,y\}} \sum_{t=1}^T \langle \ell_t, u \rangle \right] \\
& = \Exp\left[ \sum_{t=1}^T Z_t a_t \langle \ell, w_t \rangle \right] + \Exp \left[\max_{u \in \{x,y\}} \sum_{t=1}^T -Z_t a_t \langle \ell, u \rangle \right]  \\
& = \Exp\left[ \max_{u \in \{x,y\}} \sum_{t=1}^T -Z_t a_t \langle \ell, u \rangle \right] \\
& = \Exp\left[ \max_{u \in \{x,y\}} \sum_{t=1}^T Z_t a_t \langle \ell, u \rangle \right]  \\
& = \frac{1}{2} \Exp\left[ \sum_{t=1}^T Z_t a_t \langle \ell, x + y \rangle \right]  + \frac{1}{2}\Exp\left[ \left|\sum_{t=1}^T Z_t a_t \langle \ell, x - y \rangle \right| \right] \\
& = \frac{D}{2}\Exp\left[ \left|\sum_{t=1}^T Z_t a_t \right| \right] \\
& \ge \frac{D}{\sqrt{8}} \sqrt{\sum_{t=1}^T a_t^2} \; ,
\end{align*}
where we used that $\Exp[Z_t] = 0$, the fact that distributions of $Z_t$ and
$-Z_t$ are the same, the formula $\max\{a,b\} = (a+b)/2 + |a-b|/2$, and
Khinchin's inequality in the last step (Lemma A.9 in
\cite{Cesa-Bianchi-Lugosi-2006}).
\end{proof}

\end{document}